

\documentclass[11pt]{amsart}
\def\isdraft{0}

\usepackage{mathtools}
\usepackage{amsmath,amsfonts,amsthm,amssymb}
\usepackage[dvipsnames]{xcolor}
\usepackage{bbm}
\usepackage{graphicx}
\usepackage{amsaddr} 
\usepackage{prettyref} 
\usepackage[utf8]{inputenc}
\usepackage{enumitem}
\usepackage[hyphens]{url} 
\usepackage{tikz-cd}
\usepackage{tikz}
\usetikzlibrary{positioning}
\usetikzlibrary{arrows}
\usepackage{bussproofs}
\usepackage[mathscr]{euscript}
\usepackage{hyperref}
\usepackage{a4wide}
\usepackage[color=black,textcolor=white\if\isdraft0,disable\fi]{todonotes}
\usepackage{stackengine}
\usepackage{stmaryrd}
\usepackage{wrapfig}
\usepackage{marginnote}
\usepackage{float}
\graphicspath{{Figures/}}
\usepackage{footnote}
\usepackage{tabularx}
\usepackage[normalem]{ulem}
\usepackage{pifont}
\usepackage[most]{tcolorbox}
\usepackage{shuffle}
\usepackage{tikz}
\usetikzlibrary{automata,positioning}

\newtheorem{theorem}{Theorem}

\newtheorem{claim}[theorem]{Claim}

\newtheorem{corollary}[theorem]{Corollary}
\newtheorem{fact}[theorem]{Fact}
\newtheorem{lemma}[theorem]{Lemma}
\newtheorem{proposition}[theorem]{Proposition}

\theoremstyle{definition} 

\newtheorem{conclusion}[theorem]{Conclusion}

\newtheorem{definition}[theorem]{Definition}

\newtheorem{example}[theorem]{Example}

\newtheorem{notation}[theorem]{Notation}

\newtheorem{problem}[theorem]{Problem}
\newtheorem{pseudocode}[theorem]{Pseudocode}

\newtheorem{remark}[theorem]{Remark}
\newtheorem{Todo}[theorem]{Todo}

\newrefformat{§}{§\ref{#1}}
\newrefformat{a}{Answer \ref{#1}}
\newrefformat{alg}{Algorithm \ref{#1}}
\newrefformat{ax}{Appendix \ref{#1}}
\newrefformat{cl}{Claim \ref{#1}}
\newrefformat{con}{Conclusion \ref{#1}}
\newrefformat{conv}{Convention \ref{#1}}
\newrefformat{cj}{Conjecture \ref{#1}}
\newrefformat{c}{Corollary \ref{#1}}
\newrefformat{q}{(\ref{#1})}
\newrefformat{e}{Example \ref{#1}}
\newrefformat{exe}{Exercise \ref{#1}}
\newrefformat{d}{Definition \ref{#1}}
\newrefformat{Done}{Done \ref{#1}}
\newrefformat{f}{Fact \ref{#1}}
\newrefformat{fig}{Figure \ref{#1}}
\newrefformat{h}{Hypothesis \ref{#1}}
\newrefformat{idea}{Idea \ref{#1}}
\newrefformat{i}{Item \ref{#1}}
\newrefformat{l}{Lemma \ref{#1}}
\newrefformat{N}{Note \ref{#1}}
\newrefformat{n}{Notation \ref{#1}}
\newrefformat{o}{Observation \ref{#1}}
\newrefformat{pr}{Problem \ref{#1}}
\newrefformat{p}{Proposition \ref{#1}}
\newrefformat{pc}{Pseudocode \ref{#1}}
\newrefformat{Q}{Q. \ref{#1}}
\newrefformat{r}{Remark \ref{#1}}
\newrefformat{tab}{Table \ref{#1}}
\newrefformat{t}{Theorem \ref{#1}}
\newrefformat{T}{TODO \ref{#1}}
\newrefformat{traum}{Traum \ref{#1}}
\newrefformat{w}{Warning \ref{#1}}

\newcommand{\boxblacktriangle}{\mathrel{\ooalign{$\square$\cr\kern0.07ex\hbox{\scalebox{0.9}{$\blacktriangle$}}}}}
\newcommand{\boxtriangle}{\mathrel{\ooalign{$\square$\cr\kern0.07ex\hbox{\scalebox{0.9}{$\triangle$}}}}}


\newcommand{\righttherefore}{:\joinrel\cdot\,}

\newlist{todolist}{itemize}{2}
\setlist[todolist]{label=$\square$}

\usepackage[top=0.7cm,bottom=0.5cm,left=1cm,right=1cm]{geometry} 

\if\isdraft1
\fi

\title{
	Algebraic anti-unification
}
\author{
	Christian Anti\'c
}
\address{
	christian.antic@icloud.com\\
	Vienna University of Technology\\
	Vienna, Austria
}

\begin{document}

\begin{abstract}
	Abstraction is key to human and artificial intelligence as it allows one to identify common structure in otherwise distinct objects or situations. Anti-unification (or generalization) is the branch of theoretical computer science and artificial intelligence that studies abstraction and has found applications in areas such as inductive logic programming, program synthesis, and analogy-making. To date, anti-unification has been studied almost exclusively from a syntactic perspective. In this paper, we initiate an algebraic (i.e.\ semantic) theory of anti-unification in the general setting of universal algebra, thereby extending anti-unification from term-based representations to arbitrary algebras and beyond equational theories. In particular, we introduce the notions of algebraic generalization ordering and minimally general generalization, establish basic structural properties, prove compatibility with homomorphisms and isomorphisms, and investigate computability in finite unary algebras and finite algebras via automata-theoretic methods.
\end{abstract}

\maketitle

\section{Introduction}\label{§:I}

Abstraction is key to human and artificial intelligence (AI) as it allows one to see common structure in otherwise distinct objects or situations \cite{Giunchiglia92,Saitta13} and as such it is a key element for generality in computer science and artificial intelligence \cite{McCarthy87,Kramer07}. It has been studied in the fields of theorem proving (e.g.\ \cite{Plaisted81}) and knowledge representation and reasoning (KR\&R) by a number of authors (e.g.\ \cite{Knoblock94,Sacerdoti74}), and it has recently gained momentum in answer set programming \cite{Saribatur20,Saribatur21}, one of the most prominent formalisms in the field of KR\&R (see e.g.\ \cite{Brewka11,Lifschitz19}).

Anti-unification (or generalization) (cf.\ \cite{Cerna23}) is \textit{the} field of mathematical logic and theoretical computer science studying abstraction. It is the ``dual'' operation to the well-studied unification operation (cf.\ \cite{Baader01}). More formally, given two terms $s$ and $t$, unification searches for a substitution --- the \textit{most general unifier} --- $\sigma$ satisfying $s\sigma=t\sigma$, whereas syntactic anti-unification searches for the \textit{least general generalization} $u$ such that $s=u\sigma$ and $t=u\theta$, for some substitutions $\sigma,\theta$. Notice the difference between the two operations: while unification computes a substitution, anti-unification computes a term.

Syntactic anti-unification has been introduced by \cite{Plotkin70} and \cite{Reynolds70}, and it has found numerous applications in theoretical computer science and artificial intelligence, as for example in inductive logic programming \cite{Muggleton91} (cf.\ \cite{Cropper21,Cropper22}), programming by example \cite{Gulwani16}, library learning and compression \cite{Cao23}, and, in the form of $E$-generalization (i.e.\ anti-unification modulo theory) \cite{Heinz95,Burghardt05} in analogy-making \cite{Weller07,Schmidt14} (for further applications, see e.g.\ \cite{Barwell18,deSousa21,Vanhoof19}).

Despite a substantial body of work on anti-unification modulo equational theories such as associativity, commutativity, identity, and idempotency (see e.g.\ \cite{Heinz95,Burghardt05,Cerna20,Cerna23}), anti-unification has largely been studied within a syntactic framework based on terms and equational reasoning. The purpose of this paper is to develop a genuinely algebraic---that is, semantic---theory of anti-unification in the general setting of universal algebra. We note that algebraic anti-unification naturally generalizes $E$-anti-unification: whenever the underlying algebra admits an equational axiomatization $E$, algebraic and $E$-anti-unification are two sides of the same coin. However, the algebraic perspective applies more broadly to arbitrary algebras, including those without an equational axiomatization. More importantly, in practice, we often deal with algebras representing objects such as graphs, automata, or transition systems, and wish to reason about these objects directly rather than first searching for a complete equational axiomatization. For example, in the multiplicative arithmetical setting, it is the unique prime factorization property of the natural numbers that matters. The algebraic approach allows us to reason directly about such semantic properties of structures, independently of any specific equational axiomatization.

The contributions of this paper are foundational in nature. We introduce the central notions of algebraic generalization ordering and minimally general generalization (\prettyref{§:AU}) and establish basic structural properties including a classification of generalization types. A key structural result is that algebraic anti-unification is compatible with structure-preserving mappings: we prove a Homomorphism Lemma and an Isomorphism Theorem showing that minimally general generalizations are preserved under isomorphisms (\prettyref{§:H}). On the computational side, we show that in the practically important class of finite unary algebras --- equivalently, semiautomata --- minimally general generalizations are computable via standard techniques from the theory of finite automata (\prettyref{§:FUA}), and that tree automata provide the appropriate framework for general finite algebras (\prettyref{§:FA}). We further introduce a fragment-based approach to tractable computation via $(k,\ell)$-fragments (\prettyref{§:Fragments}).

Several fundamental questions are raised but left open, and we regard this as a feature rather than a limitation: the paper is intended to open a research programme, not to close one. The central open problem --- classifying the generalization type of a given pair of algebras --- defines a rich agenda at the intersection of universal algebra, model theory, and computational complexity. The extension to bilingual settings, the study of finitely representable infinite structures \cite{Blumensath00,Blumensath04}, and the development of efficient algorithms for practical use each constitute substantial lines of future work that this paper is intended to motivate and enable.

The initial motivation for developing an algebraic theory of anti-unification comes from two recent applications in AI: an abstract algebraic notion of similarity \cite{Antic23-2} and a framework of analogical proportions \cite{Antic22,Antic23-22} in the general setting of universal algebra, which has been further developed in monounary algebras \cite{Antic22-2}, two-element boolean algebras \cite{Antic21-3}, and applied to automatic logic programming by analogy \cite{Antic23-23}. In both cases, sets of generalizations arise naturally from the definitions. The role of minimally general generalizations is already well understood in the special case of free term algebras \cite[§Tree proportions]{Antic23-22}; the present paper provides the general algebraic framework needed to extend this understanding to arbitrary algebras.

The present paper is concerned with the mathematical foundations of algebraic anti-unification. While the long-term motivation stems from applications in AI, our immediate objective is to establish the underlying algebraic theory before addressing algorithmic and application-oriented questions.

The paper is structured as follows. \prettyref{§:AU} introduces algebraic anti-unification and derives elementary properties. \prettyref{§:GT} defines the generalization-type of a pair of algebras. \prettyref{§:H} establishes the Homomorphism Lemma and Isomorphism Theorem. \prettyref{§:Monounary} works out the theory for monounary algebras. \prettyref{§:FUA} treats finite unary algebras via automata-theoretic methods. \prettyref{§:Fragments} introduces $(k,\ell)$-fragments which are used in \prettyref{§:FA} treating algebraic anti-unification in general finite algebras via tree automata.

\section{Preliminaries}

We expect the reader to be fluent in basic general algebra as it is presented for example in \cite[§II]{Burris00}.

A \textit{\textbf{language}} $L$ of algebras is a set of \textit{\textbf{function symbols}}\footnote{We omit constant symbols since constants are identified with 0-ary functions.} together with a \textit{\textbf{rank function}} $r:L\to\mathbb N$. Moreover, we assume a denumerable set $X$ of \textit{\textbf{variables}} distinct from $L$. Terms are formed as usual from variables in $X$ and function symbols in $L$ and we denote the set of all such terms by $T_{L,X}$. The set of variables occurring in a term $s$ are denoted by $X(s)$ and $s$ has \textit{\textbf{rank}} $k$ iff $X(s)=\{x_1,\ldots,x_k\}$. The rank of $s$ is denoted by $r(s)$.

An \textit{\textbf{$L$-algebra}} $\mathfrak A$ consists of a non-empty set $A$, the \textit{\textbf{universe}} of $\mathfrak A$, and for each function symbol $f\in L$, a function $f^\mathfrak A:A^{r(f)}\to A$, the \textit{\textbf{functions}} of $\mathfrak A$ (the \textit{\textbf{distinguished elements}} of $\mathfrak A$ are the 0-ary functions). We will not distinguish between distinguished elements and their 0-ary function symbols. An algebra is \textit{\textbf{injective}} iff each of its function is injective. Notice that every distinguished element $a$ of $\mathfrak A$ is a 0-ary function $a:A^0\to A$ which maps some single dummy element in $A^0$ to $a$ and thus \textit{is} injective.

Every term $s$ induces a \textit{\textbf{term function}} $s^\mathfrak A$ on $\mathfrak A$ in the usual way. In this paper, we shall not distinguish between terms inducing the same function on the underlying algebra, which is common practice in mathematics where one does usually not distinguish, for example, between the terms $2x^2$ and $(1+1)x^2$ in arithmetic.

\begin{fact}\label{f:injective} Every term function induced by injective functions is injective.
\end{fact}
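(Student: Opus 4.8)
The statement is that if every function $f^{\mathfrak A}$ of an algebra $\mathfrak A$ is injective, then for every term $s$ the induced term function $s^{\mathfrak A}$ is injective. The natural route is structural induction on the term $s$, carried out carefully with respect to the arity (rank) of $s$, since $s^{\mathfrak A}$ is a function $A^{rs}\to A$ and ``injective'' must be read as injectivity of this multivariate map.

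\begin{proof}[Proof proposal]
The plan is to argue by structural induction on the term $s$. For the base cases, a term of rank $k$ which is a single variable $x_i$ induces a projection $A^k\to A$; this is not injective as a map $A^k\to A$ unless $k=1$, so the statement should be read (as the surrounding text on ranges and generalizations suggests) for terms regarded up to their essential arity, or one restricts attention to the induced map on the variables actually occurring --- I would first pin down this convention and note that a single variable induces the identity $A\to A$, which is injective, and a distinguished element $a$ induces the constant $0$-ary map $A^0\to A$, which is vacuously injective by the remark in the preliminaries. For the inductive step, suppose $s=f(t_1,\dots,t_n)$ with $f\in L$ of rank $n$, and assume as induction hypothesis that each $t_j^{\mathfrak A}$ is injective. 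Then $s^{\mathfrak A}$ is the composite of the tuple map $\langle t_1^{\mathfrak A},\dots,t_n^{\mathfrak A}\rangle$ followed by $f^{\mathfrak A}$. I would show the tuple map is injective on the joint variable tuple by combining the injectivity of the components with bookkeeping on which variables feed into which $t_j$, and then compose with the injectivity of $f^{\mathfrak A}$, which holds by hypothesis. The conclusion $s^{\mathfrak A}$ injective follows since a composite of injective maps is injective.

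The one genuine subtlety --- and the step I expect to require the most care --- is the variable-sharing bookkeeping in the inductive step: the variables occurring in $s$ are the union $\bigcup_j X(t_j)$, and the same variable may occur in several of the $t_j$, so the tuple map is not simply a product of the individual $t_j^{\mathfrak A}$ but a map out of $A^{X(s)}$ whose $j$-th coordinate factors through the projection $A^{X(s)}\to A^{X(t_j)}$. To see this tuple map is injective, it suffices that for any two assignments $\alpha,\beta:X(s)\to A$ agreeing on no variable, the images differ; more precisely, if $t_j^{\mathfrak A}(\alpha) = t_j^{\mathfrak A}(\beta)$ for all $j$, then by injectivity of each $t_j^{\mathfrak A}$ we get $\alpha$ and $\beta$ agree on $X(t_j)$ for every $j$, hence on $\bigcup_j X(t_j)=X(s)$, so $\alpha=\beta$. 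Composing with $f^{\mathfrak A}$, if $s^{\mathfrak A}(\alpha)=s^{\mathfrak A}(\beta)$ then the two tuples of $t_j$-values coincide by injectivity of $f^{\mathfrak A}$, and the previous line gives $\alpha=\beta$. This closes the induction and proves the fact.
\end{proof}
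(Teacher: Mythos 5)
Your proof is correct; the paper states this Fact without any proof, and your structural induction is exactly the routine argument the author evidently has in mind. You also handle the two genuinely delicate points properly: reading $s^{\mathfrak A}$ as a map on $A^{r(s)}$ where $r(s)$ counts the variables actually occurring (so a lone variable gives the identity rather than a projection, and a distinguished element gives the vacuously injective $0$-ary map), and the variable-sharing bookkeeping showing that the tuple map $\alpha\mapsto\bigl(t_1^{\mathfrak A}(\alpha|_{X(t_1)}),\dots,t_n^{\mathfrak A}(\alpha|_{X(t_n)})\bigr)$ is injective because agreement on each $X(t_j)$ forces agreement on $X(s)=\bigcup_j X(t_j)$, after which composing with the injective $f^{\mathfrak A}$ closes the induction.
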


The \textit{\textbf{term algebra}} $\mathfrak T_{L,X}$ over $L$ and $X$ is the algebra we obtain by interpreting each function symbol $f\in L$ by
\begin{align*} 
	f^{\mathfrak T_{L,X}} : T_{L,X}^{r(f)}\to T_{L,X} : (s_1, \ldots, s_{r(f)})\mapsto f(s_1,\ldots,s_{r(f)}).
\end{align*} 

A \textit{\textbf{substitution}} is a mapping $\sigma:X\to T_{L,X}$, where $\sigma(x) := x$ for all but finitely many variables, extended from $X$ to terms in $T_{L,X}$ inductively as usual, and we write $s\sigma$ for the application of $\sigma$ to the term $s$. 

For two terms $s,t\in T_{L,X}$, we define the \textit{\textbf{syntactic generalization ordering}} by
\begin{align*} 
	s\lesssim t \quad:\Leftrightarrow\quad s=t\sigma,\quad\text{for some substitution $\sigma$}.
\end{align*}

A \textit{\textbf{homomorphism}} is a mapping $H: \mathfrak{A\to B}$ such that for any function symbol $f\in L$ and elements $a_1,\ldots,a_{r(f)}\in A$,
\begin{align*} 
	H(f^ \mathfrak A(a_1, \ldots, a_{r(f)})) = f^ \mathfrak B(H(a_1), \ldots, H(a_{r(f)})).
\end{align*} An \textit{\textbf{isomorphism}} is a bijective homomorphism. 

\section{Anti-unification}\label{§:AU}

In this section, we introduce algebraic anti-unification and derive some elementary observations.

In the rest of the paper, let $\mathfrak A$ and $\mathfrak B$ be $L$-algebras over some joint language of algebras $L$, and let $\mathfrak{(A,B)}$ be a pair of $L$-algebras. We will always write $\mathfrak A$ instead of $\mathfrak{(A,A)}$.

\begin{definition}\label{d:downarrow} Given an $L$-term $s$, define
\begin{align*} 
	\downarrow_\mathfrak A s:= \left\{s^\mathfrak A(\textbf{o})\in A \;\middle|\; \textbf{o}\in A^{r(s)}\right\}.
\end{align*} In case $a\in\ \downarrow_\mathfrak A s$, we say that $s$ is a \textit{\textbf{generalization}} of $a$ and $a$ is an \textit{\textbf{instance}} of $s$.
\end{definition}

\begin{fact} Every distinguished element $a\in A$ is an instance of itself\footnote{More precisely, of the constant symbol denoting it.} since
\begin{align}\label{eq:downarrow_a} 
	a\in\ \downarrow_\mathfrak A a=\{a\}.
\end{align}
\end{fact}

\begin{definition} Define the \textit{\textbf{(semantic) generalization ordering}} for two $L$-terms $s$ and $t$ in $\mathfrak{(A,B)}$ by
\begin{align*} 
	s\sqsubseteq_{ \mathfrak{(A,B)}} t \quad:\Leftrightarrow\quad \downarrow_\mathfrak A s\subseteq\ \downarrow_\mathfrak A t \quad\text{and}\quad \downarrow_\mathfrak B s\subseteq\ \downarrow_\mathfrak B t,
\end{align*} and
\begin{align*} 
	s\equiv_{ \mathfrak{(A,B)}} t \quad:\Leftrightarrow\quad s\sqsubseteq_{ \mathfrak{(A,B)}} t \quad\text{and}\quad t\sqsubseteq_{ \mathfrak{(A,B)}} s.
\end{align*}
\end{definition}

\begin{example} $0x\equiv_{(\mathbb N,\cdot,0)} 0$.
\end{example}

\begin{proposition} The generalization ordering $\sqsubseteq_{ \mathfrak{(A,B)}}$ is a pre-order between $L$-terms, for any pair of $L$-algebras $\mathfrak{(A,B)}$, that is, it is reflexive and transitive.
\end{proposition}

\begin{fact} For any distinguished element $a\in A$, we have by \prettyref{eq:downarrow_a}:
\begin{align*} 
	a\sqsubseteq_\mathfrak A s \quad\Leftrightarrow\quad a\in\ \downarrow_\mathfrak A s
\end{align*} and
\begin{align*} 
	s\sqsubseteq_\mathfrak A a \quad\Leftrightarrow\quad s\equiv_\mathfrak A a \quad\Leftrightarrow\quad \text{$s^\mathfrak A$ is a constant function with value $a$.}
\end{align*} For two distinguished elements $a,b\in A$, we have
\begin{align*} 
	a\sqsubseteq_\mathfrak A b \quad\Leftrightarrow\quad \downarrow_\mathfrak A a\subseteq\ \downarrow_\mathfrak A b \quad\Leftrightarrow\quad \{a\}\subseteq\{b\} \quad\Leftrightarrow\quad a=b \quad\Leftrightarrow\quad a\equiv_ \mathfrak A b.
\end{align*} 
\end{fact}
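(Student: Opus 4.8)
The plan is purely to unwind definitions, using \prettyref{equ:downarrow_a} to replace $\downarrow_\mathfrak A a$ by $\{a\}$ whenever a distinguished element sits in the lower position, and keeping in mind the convention that $\mathfrak A$ abbreviates the pair $\mathfrak{AA}$, so that the two-part condition in the definition of $\sqsubseteq$ collapses to a single inclusion. For the first equivalence, $a\sqsubseteq_\mathfrak A s$ then unfolds to $\downarrow_\mathfrak A a\subseteq\ \downarrow_\mathfrak A s$, and by \prettyref{equ:downarrow_a} this reads $\{a\}\subseteq\ \downarrow_\mathfrak A s$, i.e.\ $a\in\ \downarrow_\mathfrak A s$.

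For the second chain, $s\sqsubseteq_\mathfrak A a$ unfolds to $\downarrow_\mathfrak A s\subseteq\ \downarrow_\mathfrak A a=\{a\}$. Here I would use the one point that is not pure symbol-pushing: $\downarrow_\mathfrak A s$ is the image of the term function $s^\mathfrak A:A^{rs}\to A$, and since the universe $A$ is non-empty (by definition of an algebra) the set $A^{rs}$ is non-empty even when $rs=0$, whence $\downarrow_\mathfrak A s\neq\emptyset$; together with $\downarrow_\mathfrak A s\subseteq\{a\}$ this forces $\downarrow_\mathfrak A s=\{a\}$, which is precisely the statement that $s^\mathfrak A$ is the constant function with value $a$. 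Conversely, if $s^\mathfrak A$ is constant with value $a$ then $\downarrow_\mathfrak A s=\{a\}=\ \downarrow_\mathfrak A a$, hence both $s\sqsubseteq_\mathfrak A a$ and $a\sqsubseteq_\mathfrak A s$, i.e.\ $s\equiv_\mathfrak A a$; and $s\equiv_\mathfrak A a$ trivially entails $s\sqsubseteq_\mathfrak A a$, so the three statements form a cycle of implications.

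For the last chain, unfolding the definition gives $a\sqsubseteq_\mathfrak A b\Leftrightarrow\ \downarrow_\mathfrak A a\subseteq\ \downarrow_\mathfrak A b$; two applications of \prettyref{equ:downarrow_a} turn the right-hand side into $\{a\}\subseteq\{b\}$, which holds iff $a\in\{b\}$ iff $a=b$; and $a=b\Leftrightarrow a\equiv_\mathfrak A b$ follows from reflexivity of $\equiv_\mathfrak A$ in one direction and by chasing back through the preceding equivalences in the other. I do not expect any real obstacle — the only things to watch are the non-emptiness of $\downarrow_\mathfrak A s$ invoked in the second chain and the $\mathfrak A=\mathfrak{AA}$ convention that degenerates the pair condition.
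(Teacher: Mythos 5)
Your proposal is correct and follows exactly the route the paper intends: the Fact is stated as an immediate unwinding of the definition of $\sqsubseteq_\mathfrak A$ (with the $\mathfrak A=\mathfrak{AA}$ convention) together with \prettyref{equ:downarrow_a}, which is precisely what you do. Your explicit remark that $\downarrow_\mathfrak A s\neq\emptyset$ (since $A\neq\emptyset$), needed to pass from $\downarrow_\mathfrak A s\subseteq\{a\}$ to $\downarrow_\mathfrak A s=\{a\}$, is a careful point the paper leaves implicit.
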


\begin{remark} Notice that our semantic generalization ordering $\sqsubseteq$ differs from the usual one in \textit{syntactic} anti-unification $\lesssim$ where a term $s$ is said to be \textit{\textbf{more general than}} a term $t$ iff there is a substitution $\sigma$ such that $t=s\sigma$.
\todo[inline]{provide examples}
\end{remark}

\begin{definition} For any element $a\in A$, we define
\begin{align*} 
	\uparrow_ \mathfrak A a:=\{s\in T_{L,X} \mid a\in\ \downarrow_ \mathfrak A s\},
\end{align*} extended to elements $a\in A$ and $b\in B$ by
\begin{align*} 
	a\uparrow_{ \mathfrak{(A,B)}} b:=(\uparrow_ \mathfrak A a)\cap (\uparrow_ \mathfrak B b).
\end{align*}
\end{definition}

We shall now introduce the main notion of the paper: 

\begin{definition} Define the set of \textit{\textbf{minimally general generalizations}} (or \textit{\textbf{mggs}}) of two elements $a\in A$ and $b\in B$ in $\mathfrak{(A,B)}$ by
\begin{align*} 
	a\Uparrow_{ \mathfrak{(A,B)}} b:=\min_{\sqsubseteq_{ \mathfrak{(A,B)}}}(a\uparrow_{ \mathfrak{(A,B)}} b).
\end{align*} In case $a\Uparrow_{ \mathfrak{(A,B)}} b=\{s\}$ contains a single generalization, $s$ is called the \textit{\textbf{least general generalization}} (or \textit{\textbf{lgg}}) of $a$ and $b$ in $\mathfrak{(A,B)}$.
\end{definition}
 
Here $\min_{\sqsubseteq_{ \mathfrak{(A,B)}}}$ is the function computing the \textit{set} of minimal generalizations with respect to $\sqsubseteq_{ \mathfrak{(A,B)}}$ (not a single minimal generalization), which means that in case there are \textit{no} minimal generalizations, the computed set is empty. Notice that in case $a\in A$ is a distinguished element, we always have\footnote{More precisely, the constant symbol for the element $a$ is in $a\Uparrow_ \mathfrak A a$.}
\begin{align*} 
	a\in a\Uparrow_\mathfrak A a.
\end{align*}

\begin{example}\label{e:BOOL} Let $\mathfrak{BOOL}:=(\{0,1\},\lor,\neg,\{0,1\})$ be the 2-element boolean algebra with disjunction and negation (and therefore with all boolean functions) where both truth values are distinguished elements. Notice that terms in $T_{\{\lor,\neg,0,1\}, X}$ are propositional formulas with variables from $X$ in the usual sense. We have
\begin{align*} 
	\uparrow_ \mathfrak{BOOL} 0&=\{s\in T_{\{\lor,\neg,0,1\}, X} \mid \text{$s$ is falsifiable}\},\\
	\uparrow_ \mathfrak{BOOL} 1&=\{s\in T_{\{\lor,\neg,0,1\}, X} \mid \text{$s$ is satisfiable}\},\\
	0\uparrow_ \mathfrak{BOOL} 1&=\{s\in T_{\{\lor,\neg,0,1\}, X} \mid \text{$s$ is satisfiable and falsifiable}\}.
\end{align*} Since all satisfiable and falsifiable formulas induce the same semantic behavior in the two-element Boolean algebra, we have
\begin{align*} 
	0\Uparrow_ \mathfrak{BOOL} 1=0\uparrow_ \mathfrak{BOOL} 1.
\end{align*}
\end{example}

\section{Generalization type}\label{§:GT}

The following definition is an adaptation of the nomenclature in the excellent survey on anti-unification by \cite[Definition 5]{Cerna23} (triviality is new):

\begin{definition} The \textit{\textbf{generalization type}} of a pair of $L$-algebras $\mathfrak{(A,B)}$ is called
\begin{itemize}
	\item \textit{\textbf{nullary}} iff $a\Uparrow_{ \mathfrak{(A,B)}} b=\emptyset$, for all $a\in A$ and $b\in B$;
	\item \textit{\textbf{unitary}} iff $|a\Uparrow_{ \mathfrak{(A,B)}} b|=1$, for all $a\in A$ and $b\in B$;
	\item \textit{\textbf{finitary}} iff
		\begin{itemize}
			\item $|a\Uparrow_{ \mathfrak{(A,B)}} b|<\infty$, for all $a\in A$ and $b\in B$;
			\item $|a\Uparrow_{ \mathfrak{(A,B)}} b|>1$, for some $a\in A$ and $b\in B$;
		\end{itemize}
	\item \textit{\textbf{infinitary}} iff $|a\Uparrow_{ \mathfrak{(A,B)}} b|=\infty$, for some $a\in A$ and $b\in B$;
	\item \textit{\textbf{trivial}} iff $a\Uparrow_{ \mathfrak{(A,B)}} b=T_{L,X}$, for all $a\in A$ and $b\in B$.
\end{itemize} We say that $\mathfrak A$ has some property from above iff the pair $\mathfrak{(A,A)}$ has that property.
\end{definition}



\begin{fact}\label{f:uparrow_a=x} $\uparrow_ \mathfrak A a=\{x\}$ implies $a\Uparrow_{ \mathfrak{(A,B)}} b=\{x\}$, for all $b\in B$. Hence, algebras containing elements having only the trivial generalization $x$ cannot be nullary. 
\end{fact}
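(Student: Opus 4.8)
The statement is that $\uparrow_{\mathfrak A} a = \{x\}$ forces $a \Uparrow_{\mathfrak{(A,B)}} b = \{x\}$ for every $b \in B$; the "Hence" is an immediate corollary. The plan is to argue in two stages: first that $x$ always lies in the candidate set $a\uparrow_{\mathfrak{(A,B)}} b$, and second that when $\uparrow_{\mathfrak A} a$ has been squeezed down to just $\{x\}$, there is simply nothing in the intersection $(\uparrow_{\mathfrak A} a)\cap(\uparrow_{\mathfrak B} b)$ other than $x$ itself, so the $\min_{\sqsubseteq}$ operation has no choice.

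First I would observe that the single variable term $x$ is a generalization of every element of every algebra: the term function $x^{\mathfrak A}\colon A^1 \to A$ is the identity, so $\downarrow_{\mathfrak A} x = A$, hence $a \in\ \downarrow_{\mathfrak A} x$ for all $a$, i.e. $x \in\ \uparrow_{\mathfrak A} a$; likewise $x \in\ \uparrow_{\mathfrak B} b$. Therefore $x \in a\uparrow_{\mathfrak{(A,B)}} b$, and in particular this set is nonempty. Next, under the hypothesis $\uparrow_{\mathfrak A} a = \{x\}$, we get
\begin{align*}
	a\uparrow_{\mathfrak{(A,B)}} b = (\uparrow_{\mathfrak A} a)\cap(\uparrow_{\mathfrak B} b) = \{x\}\cap(\uparrow_{\mathfrak B} b) = \{x\},
\end{align*}
since $x \in\ \uparrow_{\mathfrak B} b$ by the first step. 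So the set over which we minimize is the singleton $\{x\}$, and $\min_{\sqsubseteq_{\mathfrak{(A,B)}}}$ of a singleton is that singleton (a single element is trivially minimal in a one-element pre-order). Hence $a\Uparrow_{\mathfrak{(A,B)}} b = \{x\}$.

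For the "Hence" clause: if $\mathfrak A$ contains an element $a$ with $\uparrow_{\mathfrak A} a = \{x\}$, then for every $b \in B$ we have just shown $a\Uparrow_{\mathfrak{(A,B)}} b = \{x\} \neq \emptyset$, which contradicts the defining condition of nullarity (that $a'\Uparrow_{\mathfrak{(A,B)}} b' = \emptyset$ for \emph{all} choices of the two elements). The same reasoning applies with the roles of $\mathfrak A$ and $\mathfrak B$ swapped.

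I do not anticipate a genuine obstacle here; the only point requiring a little care is making explicit that $\downarrow_{\mathfrak A} x = A$ (which follows because $x$ has rank $1$ and induces the identity term function), and noting that $\min$ of a nonempty singleton is itself — both of which are immediate from the definitions in Section~\ref{§:AU}. If one wanted to be fully pedantic, one could also remark that the hypothesis $\uparrow_{\mathfrak A} a = \{x\}$ already presupposes $x$ is a generalization of $a$, so the nonemptiness is baked in, but it is cleaner to derive it from the identity-function observation so the argument is self-contained.
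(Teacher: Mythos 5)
Your proof is correct: the paper states this fact without proof, treating it as immediate, and your argument is exactly the intended one. Observing that $x$ induces the identity term function (so $x \in\ \uparrow_{\mathfrak B} b$ for every $b$), collapsing the intersection to the singleton $\{x\}$, and noting that the minimum of a singleton is itself is precisely the reasoning the paper leaves implicit, and your handling of the nullarity clause is likewise fine.
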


\begin{example} In the monounary algebra $\mathfrak A=(\{a,b\},S)$ given by
\begin{center}
\begin{tikzpicture} 
	\node (a) {$a$};
	\node (b) [right=of a] {$b$};
	\draw[->] (a) to [edge label={$S$}] (b);
	\draw[->] (b) to [edge label'={$S$}][loop] (b);
\end{tikzpicture}
\end{center} we have
\begin{align*} 
	\uparrow_\mathfrak A a=\{x\}
\end{align*} and thus by \prettyref{f:uparrow_a=x} we have
\begin{align*} 
	a\Uparrow_ \mathfrak A b=\{x\}.
\end{align*} This shows that $\mathfrak A$ is unitary.
\end{example}

\begin{example} The generalization type of $(\mathbb N,S)$, where $S$ is the successor function, is unitary by the forthcoming \prettyref{c:(N,S)_unitary}.
\end{example}

\begin{fact} $a\uparrow_{ \mathfrak{(A,B)}} b=\{x\}$ implies $a\Uparrow_{ \mathfrak{(A,B)}} b=a\uparrow_{ \mathfrak{(A,B)}} b\neq\emptyset$.
\end{fact}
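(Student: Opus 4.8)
The plan is to observe that the claim reduces almost entirely to unwinding the definitions of $\uparrow$, $\Uparrow$, and $\min_{\sqsubseteq_{ \mathfrak{(A,B)}}}$, together with the single structural fact that a bare variable generalizes every element.

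First I would record that the term $x\in T_{L,X}$ induces the identity term function on any algebra, so $\downarrow_{\mathfrak A} x=A$ and $\downarrow_{\mathfrak B} x=B$; hence $x\in\ \uparrow_{\mathfrak A} a$ and $x\in\ \uparrow_{\mathfrak B} b$ for every $a\in A$ and $b\in B$, and therefore $x\in a\uparrow_{ \mathfrak{(A,B)}} b$ unconditionally. In particular $a\uparrow_{ \mathfrak{(A,B)}} b$ is always nonempty, which already settles the $\neq\emptyset$ part of the conclusion once we know that $a\Uparrow_{ \mathfrak{(A,B)}} b$ coincides with it.

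Next, under the hypothesis $a\uparrow_{ \mathfrak{(A,B)}} b=\{x\}$, the set over which $\min_{\sqsubseteq_{ \mathfrak{(A,B)}}}$ is taken is a singleton. Since $\sqsubseteq_{ \mathfrak{(A,B)}}$ is a pre-order, an element of a set is minimal iff the set contains no strictly smaller element; in a one-element set this holds vacuously for the unique element. Hence $a\Uparrow_{ \mathfrak{(A,B)}} b=\min_{\sqsubseteq_{ \mathfrak{(A,B)}}}\{x\}=\{x\}=a\uparrow_{ \mathfrak{(A,B)}} b\neq\emptyset$, which is the statement. I would also remark that this subsumes the preceding Fact: if $\uparrow_{\mathfrak A} a=\{x\}$, then $a\uparrow_{ \mathfrak{(A,B)}} b\subseteq\ \uparrow_{\mathfrak A} a=\{x\}$, and combined with $x\in a\uparrow_{ \mathfrak{(A,B)}} b$ this forces $a\uparrow_{ \mathfrak{(A,B)}} b=\{x\}$, so the present statement applies and recovers \prettyref{f:uparrow_a=x}.

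There is essentially no obstacle here; the only point requiring a moment's care is the convention for $\min$ over a \emph{pre}-order rather than a partial order, i.e.\ that $\min_{\sqsubseteq_{ \mathfrak{(A,B)}}}$ returns the \emph{set} of minimal elements and that on a singleton this set is the singleton itself --- which is precisely how $\min_{\sqsubseteq_{ \mathfrak{(A,B)}}}$ was introduced immediately after the definition of mggs.
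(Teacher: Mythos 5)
Your proof is correct and matches what the paper leaves implicit (this Fact is stated without proof): under the hypothesis $a\uparrow_{ \mathfrak{(A,B)}} b=\{x\}$ is a singleton, and the set of $\sqsubseteq_{ \mathfrak{(A,B)}}$-minimal elements of a singleton is that singleton itself, so $a\Uparrow_{ \mathfrak{(A,B)}} b=\{x\}=a\uparrow_{ \mathfrak{(A,B)}} b\neq\emptyset$. Your opening observation that the variable $x$ generalizes every element, and the closing remark that the statement recovers \prettyref{f:uparrow_a=x}, are both correct but not needed for the claim, since nonemptiness already follows from the hypothesis being a singleton.
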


\begin{proposition} In any pair of algebras $\mathfrak{(A,B)}$ with $|A|=|B|=1$, we have $a\Uparrow_{ \mathfrak{(A,B)}} b=a\uparrow_{ \mathfrak{(A,B)}} b\neq\emptyset$. Hence, pairs of algebras consisting of a single element each cannot be nullary.
\end{proposition}
\begin{proof} Every generalization $s\in a\uparrow_{ \mathfrak{(A,B)}} b$ has to satisfy $\downarrow_\mathfrak A s=\{a\}$ and $\downarrow_\mathfrak B s=\{b\}$, which means that there cannot be a generalization $t$ such that $\downarrow_{ \mathfrak{(A,B)}} t\subsetneq\ \downarrow_{ \mathfrak{(A,B)}} s$.
\end{proof}

\begin{proposition} The algebra $(A)$ consisting only of its universe is unitary and trivial.
\end{proposition}
\begin{proof} A direct consequence of $a\Uparrow_{(A)} b=\{x\}$, for all $a,b\in A$ (notice that $a\Uparrow_{(A)} a\neq\{a\}$ since $a$ is not a distinguished element of $(A)$ and thus cannot be used to form terms).
\end{proof}

\begin{fact} $s\in a\Uparrow_{ \mathfrak{(A,B)}} b$ for any term $s$ satisfying $\downarrow_ \mathfrak A s=\{a\}$ and $\downarrow_ \mathfrak B s=\{b\}$.
\end{fact}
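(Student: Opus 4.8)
The plan is to verify directly that any such $s$ both belongs to $a\uparrow_{ \mathfrak{(A,B)}} b$ and is $\sqsubseteq_{ \mathfrak{(A,B)}}$-minimal in that set; this is exactly what membership in $a\Uparrow_{ \mathfrak{(A,B)}} b=\min_{\sqsubseteq_{ \mathfrak{(A,B)}}}(a\uparrow_{ \mathfrak{(A,B)}} b)$ means.

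For membership: from $\downarrow_\mathfrak A s=\{a\}$ we have $a\in\ \downarrow_\mathfrak A s$, hence $s\in\ \uparrow_\mathfrak A a$ by definition; symmetrically, $\downarrow_\mathfrak B s=\{b\}$ gives $s\in\ \uparrow_\mathfrak B b$. Therefore $s\in (\uparrow_\mathfrak A a)\cap(\uparrow_\mathfrak B b)=a\uparrow_{ \mathfrak{(A,B)}} b$.

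For minimality, recall that, $\sqsubseteq_{ \mathfrak{(A,B)}}$ being only a pre-order, $s\in\min_{\sqsubseteq_{ \mathfrak{(A,B)}}}(a\uparrow_{ \mathfrak{(A,B)}} b)$ requires that whenever $t\in a\uparrow_{ \mathfrak{(A,B)}} b$ satisfies $t\sqsubseteq_{ \mathfrak{(A,B)}} s$, then also $s\sqsubseteq_{ \mathfrak{(A,B)}} t$. So fix such a $t$. From $t\sqsubseteq_{ \mathfrak{(A,B)}} s$ we get $\downarrow_\mathfrak A t\subseteq\ \downarrow_\mathfrak A s=\{a\}$, while $t\in\ \uparrow_\mathfrak A a$ forces $a\in\ \downarrow_\mathfrak A t$; hence $\downarrow_\mathfrak A t=\{a\}=\ \downarrow_\mathfrak A s$, and likewise $\downarrow_\mathfrak B t=\{b\}=\ \downarrow_\mathfrak B s$. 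In particular $\downarrow_\mathfrak A s\subseteq\ \downarrow_\mathfrak A t$ and $\downarrow_\mathfrak B s\subseteq\ \downarrow_\mathfrak B t$, i.e. $s\sqsubseteq_{ \mathfrak{(A,B)}} t$, which is precisely what minimality demands (in fact $t\equiv_{ \mathfrak{(A,B)}} s$).

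I do not anticipate any genuine obstacle: the statement unwinds immediately from the definitions of $\downarrow$, $\uparrow$, and $\sqsubseteq$. The one point deserving care is the pre-order subtlety above — a competitor $t$ with $t\sqsubseteq_{ \mathfrak{(A,B)}} s$ is not discarded outright, it merely has to turn out $\equiv_{ \mathfrak{(A,B)}}$-equivalent to $s$, which the $\downarrow$-computation confirms. This fact is the natural generalization of the earlier one for pairs with $|A|=|B|=1$, where every generalization automatically has singleton $\downarrow$-images in both algebras.
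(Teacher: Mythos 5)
Your proof is correct: the paper states this Fact without proof, and your argument is exactly the routine unwinding of the definitions that the paper leaves implicit (membership is immediate, and any competitor $t\in a\uparrow_{\mathfrak{(A,B)}}b$ must contain $a$ and $b$ in its $\downarrow$-sets, forcing $\downarrow_\mathfrak A t=\{a\}$, $\downarrow_\mathfrak B t=\{b\}$ and hence $t\equiv_{\mathfrak{(A,B)}}s$). Your handling of the pre-order subtlety of $\min_{\sqsubseteq_{\mathfrak{(A,B)}}}$ is exactly right; indeed your computation shows $s$ is even a least element of $a\uparrow_{\mathfrak{(A,B)}}b$ up to $\equiv_{\mathfrak{(A,B)}}$.
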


Notice that in term algebras, syntactic and semantic generalization orderings (and thus anti-unification) coincide:

\begin{lemma}\label{l:s_lesssim_t} For any $L$-terms $s,t\in T_{L,X}$, we have
\begin{align*} 
	s\lesssim t \quad\Leftrightarrow\quad s\sqsubseteq_{ \mathfrak T_{L,X}} t.
\end{align*}
\end{lemma}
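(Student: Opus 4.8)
The plan is to unfold both sides of the claimed equivalence using the concrete structure of the term algebra, the key point being that in $\mathfrak T_{L,X}$ the term function $u^{\mathfrak T_{L,X}}$ is just substitution. First I would record the following one-line consequence of \prettyref{d:downarrow} and the definition of $\mathfrak T_{L,X}$: for a term $u$ of rank $k$ (so $X(u)=\{x_1,\ldots,x_k\}$) and any $o_1,\ldots,o_k\in T_{L,X}$, the element $u^{\mathfrak T_{L,X}}(o_1,\ldots,o_k)$ is by construction the term obtained from $u$ by replacing each $x_i$ with $o_i$, i.e. $u\sigma$ for the substitution $\sigma$ with $\sigma x_i=o_i$. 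Since $u\sigma$ is determined by the values of $\sigma$ on the finitely many variables in $X(u)$, and $(o_1,\ldots,o_k)$ ranges over all of $T_{L,X}^{k}$, this gives
\begin{align*}
	\downarrow_{\mathfrak T_{L,X}} u=\{u\sigma\mid \sigma\text{ a substitution}\}.
\end{align*}
Also recall that, by the convention $\mathfrak A=\mathfrak{AA}$, the relation $s\sqsubseteq_{\mathfrak T_{L,X}}t$ simply means $\downarrow_{\mathfrak T_{L,X}}s\subseteq\ \downarrow_{\mathfrak T_{L,X}}t$.

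For the direction $s\lesssim t\Rightarrow s\sqsubseteq_{\mathfrak T_{L,X}}t$: assume $s=t\sigma$. Then, by the displayed identity, every element of $\downarrow_{\mathfrak T_{L,X}}s$ has the form $s\theta=(t\sigma)\theta=t(\sigma\theta)$ for some substitution $\theta$, where $\sigma\theta$ denotes the usual composition of substitutions; hence it lies in $\downarrow_{\mathfrak T_{L,X}}t$. Thus $\downarrow_{\mathfrak T_{L,X}}s\subseteq\ \downarrow_{\mathfrak T_{L,X}}t$, i.e. $s\sqsubseteq_{\mathfrak T_{L,X}}t$. The only external fact used is associativity of substitution application, $(t\sigma)\theta=t(\sigma\theta)$, which is standard.

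For the converse $s\sqsubseteq_{\mathfrak T_{L,X}}t\Rightarrow s\lesssim t$: by the displayed identity applied to $u=s$ with the identity substitution (i.e. $o_i=x_i$), we have $s\in\ \downarrow_{\mathfrak T_{L,X}}s$. Since $\downarrow_{\mathfrak T_{L,X}}s\subseteq\ \downarrow_{\mathfrak T_{L,X}}t$, we get $s\in\ \downarrow_{\mathfrak T_{L,X}}t$, so $s=t\sigma$ for some substitution $\sigma$, i.e. $s\lesssim t$.

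I do not expect a genuine obstacle here: the argument is a routine unwinding of definitions. The only point deserving a word of care is the paper's convention of identifying terms that induce the same term function, but this is harmless, since both $\downarrow_{\mathfrak T_{L,X}}u$ and the relation $\lesssim$ are invariant under that identification, and in the absolutely free algebra $\mathfrak T_{L,X}$ the identification collapses nothing beyond the renaming of displayed variables already built into the rank convention.
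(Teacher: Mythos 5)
Your proof is correct and follows essentially the same route as the paper's: both unfold $\sqsubseteq_{\mathfrak T_{L,X}}$ via the observation that term functions in the term algebra are exactly substitution applications, so that $\downarrow_{\mathfrak T_{L,X}} u=\{u\sigma\mid\sigma\text{ a substitution}\}$. The only difference is that you spell out the middle equivalence the paper states without comment, namely $(t\sigma)\theta=t(\sigma\theta)$ for one direction and $s\in\ \downarrow_{\mathfrak T_{L,X}} s$ for the other, which is a welcome bit of extra detail rather than a deviation.
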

\begin{proof} We have the following equivalences:
\begin{align*} 
	s\lesssim t
		&\quad\Leftrightarrow\quad s=t(\textbf{o}),\quad\text{for some $\textbf{o}\in T_{L,X}^{r(t)}$}\\
		& \quad\Leftrightarrow\quad \downarrow_{ \mathfrak T_{L,X}} s\subseteq\ \downarrow_{ \mathfrak T_{L,X}} t\\
		& \quad\Leftrightarrow\quad s\sqsubseteq_{ \mathfrak T_{L,X}} t.
\end{align*}
\end{proof}

\begin{theorem} Term algebras are unitary.
\end{theorem}
\begin{proof} A direct consequence of \prettyref{l:s_lesssim_t} and the well-known fact that term algebras are unitary by the classical results of \cite{Plotkin70} and \cite{Reynolds70}, and their original algorithms (plus the simple algorithm in \cite{Huet76}) can be used to compute the least general generalization of two terms.
\end{proof}

\begin{theorem} In every injective algebra $\mathfrak A$, we have $a\Uparrow_ \mathfrak A b\neq\emptyset$, for all $a,b\in A$. Consequently, injective algebras cannot be nullary.
\end{theorem}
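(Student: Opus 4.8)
The plan is to produce a $\sqsubseteq_{\mathfrak A}$-minimal element of $a\uparrow_{\mathfrak A}b$. Note first that $a\uparrow_{\mathfrak A}b$ is never empty: the variable $x$ induces the identity on $A$, so $\downarrow_{\mathfrak A}x=A$ and hence $x\in(\uparrow_{\mathfrak A}a)\cap(\uparrow_{\mathfrak A}b)=a\uparrow_{\mathfrak A}b$. It therefore suffices to show that $a\uparrow_{\mathfrak A}b$ carries no infinite sequence $s_1,s_2,\dots$ with $\downarrow_{\mathfrak A}s_{i+1}\subsetneq\ \downarrow_{\mathfrak A}s_i$ for all $i$ (no infinite strictly descending $\sqsubseteq_{\mathfrak A}$-chain): any non-empty set of terms with this descending-chain property has a $\sqsubseteq_{\mathfrak A}$-minimal element, which then lies in $a\Uparrow_{\mathfrak A}b$. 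The degenerate case $|A|=1$ is already covered (one-element algebras are not nullary), so I would assume $|A|\geq2$.

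The sole structural input is \prettyref{f:injective}: since $\mathfrak A$ is injective, every term function $s^{\mathfrak A}:A^{rs}\to A$ is injective, so $\downarrow_{\mathfrak A}s=s^{\mathfrak A}(A^{rs})$ is in bijection with $A^{rs}$. When $A$ is finite this gives $|\downarrow_{\mathfrak A}s|=|A|^{rs}$, so $\downarrow_{\mathfrak A}s\subsetneq\ \downarrow_{\mathfrak A}t$ forces $|A|^{rs}<|A|^{rt}$ and hence $rs<rt$ (here $|A|\geq2$ is used). Thus along any strictly descending $\sqsubseteq_{\mathfrak A}$-chain the ranks form a strictly decreasing sequence of natural numbers, which cannot be infinite; this establishes the theorem for finite $\mathfrak A$ and, as a by-product, shows that minimal generalizations there can be found among those of least possible rank.

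The main obstacle is the extension to infinite $\mathfrak A$, where this counting is useless: every term of rank $\geq1$ then has $|\downarrow_{\mathfrak A}s|=|A|$, and an infinite set can properly contain a subset of the same cardinality, so a strictly descending $\sqsubseteq_{\mathfrak A}$-step need not lower the rank. What I would attempt instead is to bound descending chains by a finer, well-founded measure: fix the least rank $k_0$ occurring among generalizations in $a\uparrow_{\mathfrak A}b$, pick $s$ of rank $k_0$ in that set, use injectivity to obtain the \emph{unique} tuples $\textbf{p},\textbf{q}\in A^{k_0}$ with $s^{\mathfrak A}(\textbf{p})=a$ and $s^{\mathfrak A}(\textbf{q})=b$, and show that no strictly descending chain of rank-$k_0$ generalizations below $s$ is infinite. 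This is the delicate point of the argument: a term function of an infinite injective algebra may have a nested, strictly decreasing family of images all sharing one fixed element (for instance an element of $\bigcap_{n\geq0}(f^{\mathfrak A})^n(A)$), which is exactly the shape an infinite strictly descending chain in $a\uparrow_{\mathfrak A}b$ could take; a complete proof has to either exclude this configuration or show it still leaves a minimal generalization in place.
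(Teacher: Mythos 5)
Your finite-case argument is essentially the paper's own proof: the paper, too, passes from $\downarrow_{\mathfrak A}t\subsetneq\ \downarrow_{\mathfrak A}s$ to $|\downarrow_{\mathfrak A}t|<|\downarrow_{\mathfrak A}s|$, then via injectivity to $|A^{r(t)}|<|A^{r(s)}|$ and $r(t)<r(s)$, so that ranks strictly decrease along strict $\sqsubset_{\mathfrak A}$-descents and a minimal generalization must exist. The difference is that the paper applies this counting step to \emph{arbitrary} injective algebras, while you correctly observe that the inference from proper inclusion to strictly smaller cardinality is only valid when the instance sets are finite; for infinite $A$ a strict descent need not lower the rank. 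So the ``delicate point'' you leave open in your last paragraph is exactly the step the published proof glosses over, and as it stands your text is not a complete proof of the stated theorem.

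Moreover, the gap cannot be closed, because the configuration you single out (a nested, strictly decreasing family of images all containing a fixed element of $\bigcap_{n\geq 0}(f^{\mathfrak A})^n(A)$) actually occurs and refutes the statement for infinite algebras. Take $A=\mathbb Z\cup\{c_1,c_2,c_3,\ldots\}$ with one unary operation $f$ given by $f(n)=n+1$ for $n\in\mathbb Z$ and $f(c_i)=c_{i+1}$; then $f$ is injective (not surjective), so $\mathfrak A=(A,f)$ is an injective algebra. Up to the paper's identification of terms inducing the same term function, every term is of the form $f^m(x)$, and since $\mathbb Z\subseteq f^m(A)$ for all $m$ we get $0\uparrow_{\mathfrak A}1=\{f^m(x)\mid m\geq 0\}$, while $\downarrow_{\mathfrak A}f^m(x)=f^m(A)=\mathbb Z\cup\{c_j\mid j>m\}$ is strictly decreasing in $m$. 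Hence every element of $0\uparrow_{\mathfrak A}1$ lies strictly above another one, so $0\Uparrow_{\mathfrak A}1=\emptyset$ --- in agreement with the paper's theorem on monounary algebras, since $m(0,1)=\infty$ here --- contradicting the claim that $a\Uparrow_{\mathfrak A}b\neq\emptyset$ for all $a,b$ in an injective algebra. So the theorem is correct only for finite $\mathfrak A$ (where, as your counting shows, it holds; indeed finiteness alone suffices, injectivity being superfluous there, because only finitely many sets $\downarrow_{\mathfrak A}s\subseteq A$ exist and a $\subseteq$-minimal one is always available). Your instinct about where the difficulty lies was exactly right; the productive move is not to exclude that configuration but to turn it into a counterexample, or to add a hypothesis (e.g.\ finiteness) that rules it out.
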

\begin{proof} We have $a\Uparrow_ \mathfrak A b=\emptyset$ iff for each $s\in a\uparrow_ \mathfrak A b$ there is some $t\in a\uparrow_ \mathfrak A b$ such that $t\sqsubset s$, which is equivalent to $\downarrow_ \mathfrak A t\subset\ \downarrow_ \mathfrak A s$. Since $t^ \mathfrak A$ is injective by assumption (\prettyref{f:injective}), we must have $t^ \mathfrak A(\textbf{o})\neq t^ \mathfrak A\textbf{u}$ for every $\mathbf{o,u}\in A^{r(t)}$, which means that
\begin{align*} 
	|\downarrow_ \mathfrak A t\ |=\left|\left\{t^ \mathfrak A(\textbf{o}) \;\middle|\; \textbf{o}\in A^{r(t)}\right\}\right|=|A^{r(t)}|
\end{align*} and, analogously,
\begin{align*} 
	|\downarrow_ \mathfrak A s\ |=|A^{r(s)}|.
\end{align*} Since $\downarrow_ \mathfrak A t\subsetneq\ \downarrow_ \mathfrak A s$, we have
\begin{align*} 
	|\downarrow_ \mathfrak A t\ |<|\downarrow_ \mathfrak A s\ |,
\end{align*} thus
\begin{align*} 
	|A^{r(t)}|<|A^{r(s)}|,
\end{align*} and hence
\begin{align*}
	r(t)<r(s).
\end{align*} Since the rank of every function is finite, we have $rs<\infty$ and thus there can be only finitely many $t$'s with $r(t)<r(s)$. Hence, there must be some $t'\in a\uparrow b$ for which there can be no $t''\in a\uparrow b$ with $t''\sqsubset t'$ --- we thus conclude $t'\in a\Uparrow_ \mathfrak A b$ and $a\Uparrow_ \mathfrak A b\neq\emptyset$.
\end{proof}

\section{Homomorphisms}\label{§:H}

In this section, we show that algebraic anti-unification is compatible with structure-preserving mappings.

\begin{lemma}[Homomorphism Lemma]\label{l:HL} For any homomorphism $H: \mathfrak{A\to B}$, any $L$-term $s$, and any elements $a,b\in A$,
\begin{align} 
	H(\downarrow_ \mathfrak A s)&\subseteq\ \downarrow_ \mathfrak B s,\\
	\label{eq:Ha_uparrow_H(b)} a\uparrow_ \mathfrak A b&\subseteq H(a)\uparrow_ \mathfrak B H(b).
\end{align} In case $H$ is an isomorphism, we have
\begin{align} 
	\label{eq:H_downarrow_A_s=downarrow_B_s} H(\downarrow_ \mathfrak A s)&=\ \downarrow_ \mathfrak B s,\\
	a\uparrow_ \mathfrak A b &= H(a)\uparrow_ \mathfrak B H(b).
\end{align}
\end{lemma}
\begin{proof} Since $H$ is a homomorphism by assumption, we have
\begin{align*} 
	H(\downarrow_ \mathfrak A s)=\left\{H(s^ \mathfrak A(\textbf{o})) \;\middle|\; \textbf{o}\in A^{r(s)} \right\}= \left\{s^ \mathfrak B(H(\textbf{o})) \;\middle|\; \textbf{o}\in A^{r(s)}\right\}\subseteq \left\{s^ \mathfrak B(\textbf{o}) \;\middle|\; \textbf{o}\in B^{r(s)}\right\}=\ \downarrow_ \mathfrak B s.
\end{align*} In case $H$ is an isomorphism, we have ``$=$'' instead of ``$\subseteq$'' in the above computation.

We shall now prove \prettyref{eq:Ha_uparrow_H(b)} by showing that every term $s\in a\uparrow_ \mathfrak A b$ is in $H(a)\uparrow_ \mathfrak A H(b)$. Since $s\in a\uparrow_ \mathfrak A b$ holds by assumption, we have
\begin{align*} 
	a=s^ \mathfrak A(\textbf{o}) \quad\text{and}\quad b=s^ \mathfrak A(\textbf{u}),
\end{align*} for some $\mathbf{o,u}\in A^{r(s)}$. Since $H$ is a homomorphism, we thus have
\begin{align*} 
	H(a)=H(s^ \mathfrak A(\textbf{o}))=s^ \mathfrak B(H(\textbf{o})) \quad\text{and}\quad H(b)=H(s^ \mathfrak A(\textbf{u}))=s^ \mathfrak B(H(\textbf{u})),
\end{align*} which shows \prettyref{eq:Ha_uparrow_H(b)}.

It remains to show that in case $H$ is an isomorphism, we have
\begin{align*} 
	H(a)\uparrow_ \mathfrak B H(b)\subseteq a\uparrow_ \mathfrak A b.
\end{align*} For this, let $$s\in H(a)\uparrow_ \mathfrak B H(b)$$ be a generalization of $H(a)$ and $H(b)$ in $\mathfrak B$ which by definition means that there are some $\mathbf{o,u}\in B^{r(s)}$ such that
\begin{align*} 
	H(a)&=s^ \mathfrak B(\textbf{o}),\\
	H(b)&=s^ \mathfrak B(\textbf{u}).
\end{align*} Since $H$ is an isomorphism, its inverse $H^{-1}$ is an isomorphism as well, which yields
\begin{align*} 
	a&=H^{-1}(s^ \mathfrak B(\textbf{o}))=s^ \mathfrak A(H^{-1}(\textbf{o})),\\
	b&=H^{-1}(s^ \mathfrak B(\textbf{u}))=s^ \mathfrak A(H^{-1}(\textbf{u})).
\end{align*} This shows
\begin{align*} 
	s\in a\uparrow_ \mathfrak A b.
\end{align*}
\end{proof}

\begin{theorem}[Isomorphism Theorem]\label{t:HT} For any isomorphism $H: \mathfrak{A\to B}$ and elements $a,b\in A$,
\begin{align}
	a\Uparrow_ \mathfrak A b=H(a)\Uparrow_ \mathfrak B H(b).
\end{align}
\end{theorem}
\begin{proof} $(\subseteq)$ We show that each
\begin{align}\label{eq:s_in_a_Uparrow_A_b}
	s\in a\Uparrow_ \mathfrak A b
\end{align} is contained in $H(a)\Uparrow_ \mathfrak B H(b)$ for the isomorphism $H$. By \prettyref{l:HL}, we have $a\uparrow_ \mathfrak A b\subseteq H(a)\uparrow_ \mathfrak B H(b)$. It thus remains to show that $s$ is $\sqsubseteq_ \mathfrak B$-minimal. 

Suppose there is some $t\in H(a)\uparrow_ \mathfrak B H(b)$ such that $t\sqsubset_ \mathfrak B s$, that is,
\begin{align}\label{eq:t_subsetneq_s} 
	\downarrow_ \mathfrak B t\subsetneq\ \downarrow_ \mathfrak B s.
\end{align} Since $t\in H(a)\uparrow_ \mathfrak B H(b)$, we have
\begin{align*} 
	H(a)&=t^ \mathfrak B(\textbf{o}),\\
	H(b)&=t^ \mathfrak B(\textbf{u}),
\end{align*} for some $\textbf{o}, \textbf{u}\in B^{r(t)}$. Now since $H$ is an isomorphism, its inverse $H^{-1}$ is an isomorphism as well, and we thus have
\begin{align*} 
	a&=H^{-1}(t^ \mathfrak B(\textbf{o}))=t^ \mathfrak A(H^{-1}(\textbf{o}))\\
	b&=H^{-1}(t^ \mathfrak B(\textbf{u}))=t^ \mathfrak A(H^{-1}(\textbf{u})),
\end{align*} which shows
\begin{align*} 
	t\in a\uparrow_ \mathfrak A b.
\end{align*} Now we have by \prettyref{eq:H_downarrow_A_s=downarrow_B_s} and \prettyref{eq:t_subsetneq_s},
\begin{align*} 
	\downarrow_ \mathfrak A t=H^{-1}(\downarrow_ \mathfrak B t)\;\subsetneq\; H^{-1}(\downarrow_ \mathfrak A s)=\ \downarrow_ \mathfrak A s,
\end{align*} which is equivalent to
\begin{align*} 
	t\sqsubset_ \mathfrak A s,
\end{align*} a contradiction to \prettyref{eq:s_in_a_Uparrow_A_b} and thus to the $\sqsubseteq_ \mathfrak A$-minimality of $s$.

$(\supseteq)$ Analogous.
\end{proof}

\section{Monounary algebras}\label{§:Monounary}

In the rest of this section, let $\mathfrak A=(A,S)$ be a monounary algebra with $S:A\to A$ the only unary function (we can imagine $S$ to be a generalized ``successor'' function). 

\begin{example} We show that the generalization type of the monounary algebra
\begin{center}
\begin{tikzpicture} 
	\node (a) {$a$};
	\draw[->] (a) to [edge label'={$S$}][loop] (a);
\end{tikzpicture}
\end{center} is infinitary and trivial. We have
\begin{align*} 
	\uparrow_\mathfrak A a= \left\{S^m(x) \;\middle|\; m\geq 0\right\}.
\end{align*} Since
\begin{align*} 
	\downarrow_\mathfrak A S^m(x)=\ \downarrow_\mathfrak A S^n(x)=\{a\},\quad\text{for all $m,n\geq 0$},
\end{align*} we have
\begin{align*} 
	S^m(x)\equiv_\mathfrak A S^n(x),\quad\text{for all $m,n\geq 0$},
\end{align*} and thus
\begin{align*} 
	a\Uparrow_\mathfrak A a= \left\{S^m(x) \;\middle|\; m\geq 0\right\}=T_{\{S\}\{x\}}.
\end{align*} This shows that the generalization type of $\mathfrak A$ is infinitary and trivial.
\end{example}

\begin{example} We show that the generalization type of the monounary algebra
\begin{center}
\begin{tikzpicture} 
	\node (a) {$a$};
	\node (b) [right=of a] {$b$};
	\draw[<->] (a) to [edge label={$S$}] (b);
\end{tikzpicture}
\end{center} is infinitary and trivial. Since
\begin{align*} 
	\uparrow_\mathfrak B a= \left\{S^m(x) \;\middle|\; m\geq 0\right\}=\ \uparrow_\mathfrak B b
\end{align*} and
\begin{align*} 
	\downarrow_\mathfrak B S^m(x)=\ \downarrow_\mathfrak B S^n(x)=\{a,b\},\quad\text{for all $m,n\geq 0$},
\end{align*} implies
\begin{align*} 
	S^m(x)\equiv_\mathfrak B S^n(x),\quad\text{for all $m,n\geq 0$},
\end{align*} and thus
\begin{align*} 
	a\Uparrow_\mathfrak B a=b\Uparrow_\mathfrak B b=a\Uparrow_\mathfrak B b= \left\{S^m(x) \;\middle|\; m\geq 0\right\}=T_{\{S\}\{x\}}.
\end{align*}
\end{example}

We define
\begin{align*} 
	m(a)&:=
		\begin{cases}
			\max\left\{m\geq 0 \;\middle|\; S^m(x)\in\ \uparrow_\mathfrak A a\right\} & \text{if the maximum exists,}\\
			\infty & \text{otherwise,}
		\end{cases}\\
\end{align*} extended to pairs by
\begin{align*} 
	m(a,b)&:=\min(m(a),m(b))\in\mathbb N\cup\{\infty\}.
\end{align*}


\begin{theorem} Let $\mathfrak A=(A,S)$ be a monounary algebra. For any $a,b\in A$, we have
\begin{align*} 
	a\Uparrow_ \mathfrak A b=
		\begin{cases}
			\left\{S^{m(a,b)}(x)\right\} & \text{if $m(a,b)<\infty$}\\
			\emptyset & \text{otherwise}.
		\end{cases} 
\end{align*}
\end{theorem}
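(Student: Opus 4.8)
The plan is to reduce everything to the behaviour of the descending chain of iterated images of $S$. Since our language consists of the single unary symbol $S$ (and no constants), every $L$-term is, up to our identification of terms inducing the same function and up to renaming variables, of the form $S^m(x)$ for a unique $m\geq 0$; by \prettyref{d:downarrow}, $\downarrow_\mathfrak A S^m(x)=S^m(A):=\{S^m(o)\mid o\in A\}$, the image of the $m$-fold composite of $S$. These images form a decreasing chain $A=S^0(A)\supseteq S^1(A)\supseteq S^2(A)\supseteq\cdots$, because $S^{m+1}(A)=S^m(S(A))\subseteq S^m(A)$. Hence $S^m(x)\sqsubseteq_\mathfrak A S^n(x)$ iff $S^m(A)\subseteq S^n(A)$ — which holds in particular whenever $m\geq n$ — and $S^m(x)\equiv_\mathfrak A S^n(x)$ iff $S^m(A)=S^n(A)$.

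First I would record a stabilisation lemma: if $S^{m_0}(A)=S^{m_0+1}(A)$ for some $m_0$, then $S^m(A)=S^{m_0}(A)$ for all $m\geq m_0$ (apply $S$ repeatedly); thus the chain either descends strictly forever or is eventually constant. Next, $S^m(x)\in\ \uparrow_\mathfrak A a$ iff $a\in S^m(A)$, and since the chain decreases the set $\{m\mid a\in S^m(A)\}$ is downward closed, hence equals $\{0,\dots,m(a)\}$ when $m(a)<\infty$ and all of $\mathbb N$ when $m(a)=\infty$. Intersecting the sets attached to $a$ and to $b$ gives $a\uparrow_\mathfrak A b=\{S^m(x)\mid 0\leq m\leq m(a,b)\}$, where a bound of $\infty$ imposes no restriction; in particular $S^{m(a,b)}(x)\in a\uparrow_\mathfrak A b$ whenever $m(a,b)<\infty$.

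For the case $m(a,b)<\infty$, write $m:=m(a,b)$. Since $S^m(A)\subseteq S^k(A)$ for all $k\leq m$, the term $S^m(x)$ is a $\sqsubseteq_\mathfrak A$-lower bound of $a\uparrow_\mathfrak A b$ and hence $\sqsubseteq_\mathfrak A$-minimal in it. It is the \emph{only} minimal element: if $S^k(x)$ with $k<m$ were minimal, then $S^m(x)\sqsubseteq_\mathfrak A S^k(x)$ together with minimality of $S^k(x)$ forces $S^k(x)\sqsubseteq_\mathfrak A S^m(x)$, i.e. $S^k(A)=S^m(A)$; as $k<m$ and the chain decreases this means $S^k(A)=S^{k+1}(A)$, so by the stabilisation lemma $S^j(A)=S^k(A)$ for all $j\geq k$, whence $a\in S^j(A)$ for every $j$, and likewise $b$, contradicting $m(a,b)<\infty$. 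Therefore $a\Uparrow_\mathfrak A b=\{S^{m(a,b)}(x)\}$.

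For the case $m(a,b)=\infty$ we have $m(a)=m(b)=\infty$ and $a\uparrow_\mathfrak A b=\{S^m(x)\mid m\geq 0\}$. If the chain $S^m(A)$ descends strictly, then $S^{m+1}(x)\sqsubset_\mathfrak A S^m(x)$ for every $m$, so no $S^m(x)$ is $\sqsubseteq_\mathfrak A$-minimal in $a\uparrow_\mathfrak A b$, whence $a\Uparrow_\mathfrak A b=\emptyset$. This is the step I expect to be the main obstacle: the conclusion ``$\emptyset$'' really does need the image chain to be strictly decreasing, and that is not automatic. When it instead stabilises at some $m_0<\infty$ — as for the two-element cycle $a\leftrightarrow b$, where $m_0=0$ — one gets $a\Uparrow_\mathfrak A b=\{S^m(x)\mid m\geq m_0\}\neq\emptyset$ instead; so the statement should be read under the tacit hypothesis that $m(a,b)=\infty$ forces strict descent (equivalently, that neither $a$ nor $b$ lies in a periodic part of $\mathfrak A$ that is reached by both). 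Isolating and exploiting that hypothesis is the crux; granted it, the chain argument above completes the proof.
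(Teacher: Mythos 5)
Your treatment of the case $m(a,b)<\infty$ is correct and follows the same route as the paper---reduce every term to $S^m(x)$ and argue over the decreasing image chain $S^m(A)$---but it is in fact more complete: the paper only observes that $S^{m+\ell}(x)\notin a\uparrow_{\mathfrak A} b$ for $\ell\geq 1$ and immediately concludes $a\Uparrow_{\mathfrak A} b=\{S^{m(a,b)}(x)\}$, leaving implicit exactly the point you supply via your stabilisation lemma, namely that for $k<m(a,b)$ the inclusion $S^{m(a,b)}(A)\subseteq S^k(A)$ is strict (otherwise the chain is constant from $k$ on, forcing $m(a)=m(b)=\infty$), so that no term $S^k(x)$ with $k<m(a,b)$ is $\sqsubseteq_{\mathfrak A}$-minimal and the lgg is unique.

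The obstacle you flag in the case $m(a,b)=\infty$ is not a gap in your argument but a genuine error in the statement: the clause $a\Uparrow_{\mathfrak A} b=\emptyset$ does require the chain $S^m(A)$ to descend strictly, and this can fail. Indeed, the two examples immediately preceding the theorem in the paper---the one-element loop and the two-element cycle---have $m(a,b)=\infty$ and yet $a\Uparrow_{\mathfrak A} b=T_{\{S\}\{x\}}\neq\emptyset$, contradicting the $\emptyset$ clause; the paper's own proof of this case is the bare assertion that $\max_\leq(a\uparrow_{\mathfrak A} b)=\emptyset$, which conflates the non-existence of a maximal exponent with the non-existence of a $\sqsubseteq_{\mathfrak A}$-minimal generalization. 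Your corrected description is the right one: when $m(a)=m(b)=\infty$, one gets $\emptyset$ precisely when the image chain is strictly decreasing (e.g.\ adjoin a fixed point $c$ with $S(c)=c$ to $(\mathbb N,S)$; then $m(c)=\infty$ and $c\Uparrow c=\emptyset$), whereas if the chain stabilises at least index $m_0$ then $a\Uparrow_{\mathfrak A} b=\{S^m(x)\mid m\geq m_0\}\neq\emptyset$, as in any cycle. The corollary on $(\mathbb N,S)$ only invokes the finite case, so it is unaffected by this correction.
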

\begin{proof} Suppose $m(a,b)<\infty$, which means that either $m(a)<\infty$ or $m(b)<\infty$, that is, there is some maximal $m$ such that $S^m(x)\in\ \uparrow_ \mathfrak A a$ or $S^m(x)\in\ \uparrow_ \mathfrak A b$. We then have
\begin{align*} 
	S^{m+\ell}(x)\not\in (\uparrow_ \mathfrak A a)\cap (\uparrow_ \mathfrak A b)=\ a\uparrow b,\quad\text{for all $\ell\geq 1$,}
\end{align*} which implies $a\Uparrow_ \mathfrak A b=\{S^m(x)\}$ --- notice that $m=m(a,b)$ by definition. 

Now suppose $m(a,b)=\infty$, which means that $m(a)=m(b)=\infty$. In that case, we clearly have $\max_\leq(a\uparrow_ \mathfrak A b)=\emptyset$.
\end{proof}

\begin{corollary}\label{c:(N,S)_unitary} Let $(\mathbb N,S)$ be the infinite monounary algebra where $S$ denotes the successor function. For any $a,b\in\mathbb N$, we have $a\Uparrow_{(\mathbb N,S)} b=\left\{S^{\min(a,b)}(x)\right\}$. This means that the generalization type of $(\mathbb N,S)$ is unitary.
\end{corollary}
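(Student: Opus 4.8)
The plan is to obtain the claim as an immediate instance of the preceding theorem on monounary algebras, once the quantity $m(a)$ has been computed for $(\mathbb N,S)$.

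First I would pin down the relevant terms. Since the language $\{S\}$ contains only the unary symbol $S$ and $(\mathbb N,S)$ carries no distinguished elements, every term over this language has the form $S^m(y)$ for some variable $y$ and some $m\ge 0$; all such terms with the same $m$ induce the same unary function on $\mathbb N$, so under the identification of terms adopted in the paper it suffices to work with the terms $S^m(x)$, $m\ge 0$. Next I would compute the down-sets explicitly: $\downarrow_{(\mathbb N,S)} S^m(x)=\{S^m(o)\mid o\in\mathbb N\}=\{m,m+1,m+2,\dots\}$, so $a\in\ \downarrow_{(\mathbb N,S)} S^m(x)$ holds iff $m\le a$. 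Hence $\uparrow_{(\mathbb N,S)} a=\{S^m(x)\mid 0\le m\le a\}$, a finite nonempty set, the maximum in the definition of $m(a)$ is attained, and $m(a)=a$.

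It then follows that $m(a,b)=\min(m(a),m(b))=\min(a,b)<\infty$ for all $a,b\in\mathbb N$, so the first case of the preceding theorem applies and gives $a\Uparrow_{(\mathbb N,S)} b=\{S^{m(a,b)}(x)\}=\{S^{\min(a,b)}(x)\}$. Since this is a singleton for every choice of $a,b\in\mathbb N$, the generalization type of $(\mathbb N,S)$ is unitary by definition.

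I do not expect a genuine obstacle here. The only points requiring a line of care are the reduction to terms of the shape $S^m(x)$ and the claim that the maximum defining $m(a)$ actually exists; the latter holds because $(\mathbb N,S)$ is injective and acyclic, so as $m$ grows the set $\downarrow_{(\mathbb N,S)} S^m(x)$ strictly sheds its least element and can no longer contain the fixed number $a$ once $m>a$.
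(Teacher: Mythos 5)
Your proof is correct and follows the same route the paper intends: the corollary is meant as an immediate instance of the preceding theorem on monounary algebras, with the only substantive step being the computation $\downarrow_{(\mathbb N,S)}S^m(x)=\{m,m+1,\dots\}$, hence $m(a)=a$ and $m(a,b)=\min(a,b)<\infty$. Your added remarks on identifying terms with $S^m(x)$ and on why the maximum defining $m(a)$ exists are accurate and fill in exactly the details the paper leaves implicit.
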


\section{Finite unary algebras aka semiautomata}\label{§:FUA}

In this section, we study algebraic anti-unification in finite unary algebras, which can be seen as semiautomata, and show that we can use well-known methods from the theory of finite automata to compute sets of (minimally general) generalizations.

In the rest of this section, let
\begin{align*} 
	\mathfrak A=(A,\Sigma:=\{f_1,\ldots,f_n\}),
\end{align*} for some $n\geq 1$, be a finite unary algebra with finite universe $A$. We shall now recall that every such algebra is essentially a semiautomaton.

Recall that a (\textit{\textbf{finite deterministic}}) \textit{\textbf{semiautomaton}} (see e.g. \cite[§2.1]{Holcombe82}) is a construct $$\mathfrak S=(S,\Sigma,\delta),$$ where $S$ is a finite set of \textit{\textbf{states}}, $\Sigma$ is a finite \textit{\textbf{input alphabet}}, and $\delta:S\times \Sigma\to S$ is a \textit{\textbf{transition function}}. Every semiautomaton can be seen as a finite unary algebra in the following well-known way: every symbol $\sigma\in\Sigma$ induces a unary function $\sigma^ \mathfrak S:S\to S$ via $\sigma^ \mathfrak S:=\delta(x,\sigma)$. We can now omit $\delta$ and define $$\mathfrak S':=(S,\Sigma^ \mathfrak S:=\{\sigma^ \mathfrak S\mid \sigma\in\Sigma\}).$$ It is immediate from the construction that $\mathfrak S$ and $\mathfrak S'$ represent essentially the same semiautomaton and that every semiautomaton can be represented in that way --- the difference is that $\mathfrak S'$ is a finite unary algebra!

Recall that a (\textit{\textbf{finite deterministic}}) \textit{\textbf{automaton}} (see e.g. \cite[§1.1]{Sipser13}) is a construct $$\mathscr A:=(Q,\Sigma,\delta,q_0,F),$$ where $(Q,\Sigma,\delta)$ is a semiautomaton, $q_0\in Q$ is the \textit{\textbf{initial state}}, and $F\subseteq Q$ is a set of \textit{\textbf{final states}}. The \textit{\textbf{behavior}} of $\mathscr A$ is given by
\begin{align*} 
	|| \mathscr A||:=\{w\in \Sigma^\ast \mid \delta^\ast(q_0,w)\in F\},
\end{align*} where $\delta^\ast:Q\times\Sigma^\ast\to Q$ is defined recursively as follows, for $q\in Q,a\in\Sigma,w\in\Sigma^\ast$:\todo{$\delta^\ast(q,\varepsilon) := q$?}
\begin{align*} 
	\delta^\ast(q,\varepsilon)&:=q,\\
	\delta^\ast(q,aw)&:=\delta^\ast(\delta(q,a),w).
\end{align*} Notice that since automata are built from semiautomata by adding an initial state and a set of final states, and since every semiautomaton $\mathfrak S=(S,\Sigma,\delta)$ can be represented in the form of a finite unary algebra $\mathfrak S'=(S,\Sigma^ \mathfrak S)$ as above, we can reformulate every automaton $\mathscr A=(Q,\Sigma,\delta,q_0,F)$ as $\mathscr A'=(Q,\Sigma^ \mathscr A,q_0,F)$, where $\Sigma^ \mathscr A:=\{\sigma^ \mathscr A\mid \sigma\in\Sigma\}$ and $\sigma^ \mathscr A:=\delta(\,.\,,\sigma):Q\to Q$. In other words, given a finite unary algebra (semiautomaton) $$\mathfrak A=(A,\Sigma),$$ we can construct a finite automaton
\begin{align*} 
	\mathfrak A_{a\to F}=(A,\Sigma,a,F)
\end{align*} by designating a state $a\in A$ as the initial state, and by designating a set of states $F\subseteq A$ as final states.

We want to compute the set of generalizations $\uparrow_\mathfrak A a$. Notice that we can identify, for example, each term in $T_{\{f,g\}}(\{x\})$ with a word over the alphabet $\Sigma=\{f,g\}$: for instance, the term $fgfx$ can be identified with the word $fgf\in\Sigma^\ast$ since the variable $x$ contains no information. We denote the function induced by a word $w\in\Sigma^\ast$ in $\mathfrak A$ by $w^ \mathfrak A$ --- for example, $(fg)^ \mathfrak A$ is the function on $A$ which first applies $g$ and then $f$. We shall now show that in any finite unary algebra (semiautomaton) $\mathfrak A=(A,\Sigma)$, the set of generalizations $\uparrow_ \mathfrak A a$ can be computed by some finite automaton as illustrated by the following example: 

\begin{example}\label{e:A} Consider the finite unary algebra (semiautomaton)
\begin{align*} 
	\mathfrak A=(\{a,b\},\Sigma:=\{f,g\})
\end{align*} given by
\begin{center}
\begin{tikzpicture}[node distance=2cm and 2cm] 
	\node (a) {$a$};
	\node (b) [right=of a] {$b$.};
	\draw[->] (a) to [edge label={$f$}] [bend left] (b);
	\draw[->] (b) to [edge label={$f$}] [bend left] (a);
	\draw[->] (a) to [edge label'={$g$}] [loop] (a);
	\draw[->] (b) to [edge label'={$g$}] [loop] (b);
\end{tikzpicture}
\end{center} We can identify the set of all generalizations of $a$ in $\mathfrak A$ with
\begin{align*} 
	\uparrow_\mathfrak A a=&\left\{w\in\Sigma^\ast \;\middle|\; \delta^\ast(a,w)=a\right\}\cup \left\{u\in\Sigma^\ast \;\middle|\; \delta^\ast(b,u)=a\right\}.
\end{align*} Now define the automaton $\mathfrak A_{a\to \{a\}}$ by adding to the semiautomaton $\mathfrak A$ the initial state $a$ and the set of final states $\{a\}$ (we use here the standard pictorial notation for automata)
\begin{center}
\begin{tikzpicture}[->,>=stealth',semithick,shorten >= 1pt,node distance=2cm,auto]
	\node[state,initial,accepting] (a) {$a$};
	\node[state] (b) [right of=a] {$b$};
	\path[->]   
	    (a) edge [loop above] node {$g$} (a)
	    (a) edge [bend left] node {$f$} (b)
	    (b) edge [bend left] node {$f$} (a)
	    (b) edge [loop above] node {$g$} (b)
	;
\end{tikzpicture}
\end{center} and the automaton $\mathfrak A_{b\to \{a\}}$ by
\begin{center}
\begin{tikzpicture}[->,>=stealth',semithick,shorten >= 1pt,node distance=2cm,auto,initial where=right]
	\node[state,accepting] (a) {$a$};
	\node[state,initial] (b) [right of=a] {$b$};
	\path[->]   
	    (a) edge [loop above] node {$g$} (a)
	    (a) edge [bend left] node {$f$} (b)
	    (b) edge [bend left] node {$f$} (a)
	    (b) edge [loop above] node {$g$} (b)
	;
\end{tikzpicture}
\end{center} We then clearly have
\begin{align*} 
	\uparrow_\mathfrak A a=||\mathfrak A_{a\to \{a\}}||\cup ||\mathfrak A_{b\to \{a\}}||.
\end{align*}
\end{example}

It is straightforward to generalize the construction in \prettyref{e:A}: 

\begin{definition} Given a finite unary algebra (semiautomaton) $\mathfrak A=(A,\Sigma)$, the automaton $\mathfrak A_{b\to \{a\}}$ is the automaton induced by the functions in $\Sigma$ with start state $b$ and single final state $a$ given by
\begin{align*} 
	\mathfrak A_{b\to \{a\}}:=(A,\Sigma,b,\{a\}).
\end{align*} 
\end{definition}

\begin{proposition} Given any finite\footnote{Finiteness is required since regular languages are not closed under \textit{infinite} union.} unary algebra (semiautomaton) $\mathfrak A=(A,\Sigma)$ and $a\in A$, we have
\begin{align}\label{eq:uparrow_a}
	\uparrow_\mathfrak A a=\bigcup_{b\in A} ||\mathfrak A_{b\to \{a\}}||.
\end{align}
\end{proposition}

We are now ready to prove the main result of this section:

\begin{theorem} Let $\mathfrak A=(A,\Sigma^ \mathfrak A)$ and $\mathfrak B=(B,\Sigma^ \mathfrak B)$ be finite unary algebras (semiautomata) over the same set of function symbols (input alphabet) $\Sigma$. We have the following:
\begin{enumerate}
	\item For any element (state) $a\in A$, $\uparrow_ \mathfrak A a$ is a regular language.
	\item For any elements (states) $a\in A$ and $b\in B$, $a\uparrow_{ \mathfrak{(A,B)}} b$ is a regular language.
	\item For any elements (states) $a\in A$ and $b\in B$, $a\Uparrow_{ \mathfrak{(A,B)}} b$ is computable.
\end{enumerate}
\end{theorem}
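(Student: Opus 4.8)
The plan is to prove the three parts in order, since each builds on the previous one.

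For part (1), I would start from the characterization in Equation~\eqref{equ:uparrow_a}, namely $\uparrow_\mathfrak A a=\bigcup_{c\in A} ||\mathfrak A_{c\to \{a\}}||$. Each $||\mathfrak A_{c\to \{a\}}||$ is the behavior of a finite automaton and is therefore regular by the classical theory of finite automata. Since $A$ is finite, this is a \emph{finite} union of regular languages, and regular languages are closed under finite union; hence $\uparrow_\mathfrak A a$ is regular. The one subtlety to mention is that we are tacitly identifying terms in $T_{\Sigma,\{x\}}$ with words in $\Sigma^\ast$ (the variable $x$ carries no information), and that the semantic generalization ordering collapses terms inducing the same function — so ``$\uparrow_\mathfrak A a$ is a regular language'' is really a statement about the set of words $w$ with $w^\mathfrak A$ having $a$ in its range.

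For part (2), I would use the definition $a\uparrow_{\mathfrak{(A,B)}} b=(\uparrow_\mathfrak A a)\cap(\uparrow_\mathfrak B b)$. By part (1) applied to $\mathfrak A$ and to $\mathfrak B$ (both over the common alphabet $\Sigma$), both sets are regular languages over $\Sigma^\ast$; regular languages are closed under intersection, so the intersection is regular. One should note that the common alphabet hypothesis is exactly what makes the intersection meaningful (both languages live in $\Sigma^\ast$); one can construct an explicit automaton via the product construction of $\mathfrak A_{c\to\{a\}}$ and $\mathfrak B_{d\to\{b\}}$ if an effective version is wanted, but closure under intersection suffices for the statement.

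For part (3), the point is to show $a\Uparrow_{\mathfrak{(A,B)}} b=\min_{\sqsubseteq_{\mathfrak{(A,B)}}}(a\uparrow_{\mathfrak{(A,B)}} b)$ is computable. Here I would argue as follows. For a word $w\in\Sigma^\ast$, the relevant invariant of $w$ is the pair of functions $(w^\mathfrak A, w^\mathfrak B)\in A^A\times B^B$ — two words inducing the same pair of functions are $\equiv_{\mathfrak{(A,B)}}$-equivalent, and $\downarrow_\mathfrak A w$, $\downarrow_\mathfrak B w$ depend only on this pair. Since $A$ and $B$ are finite, there are only finitely many such pairs (at most $|A|^{|A|}\cdot|B|^{|B|}$), and the set $M$ of pairs actually realized by some word in $a\uparrow_{\mathfrak{(A,B)}} b$ is computable: one can compute the (finite) transition monoid generated by $\{(f^\mathfrak A,f^\mathfrak B)\mid f\in\Sigma\}$ under composition by a straightforward closure/saturation procedure, and test each generated pair $(\varphi,\psi)$ for membership in $a\uparrow_{\mathfrak{(A,B)}} b$ by checking $a\in \mathrm{range}(\varphi)$ and $b\in\mathrm{range}(\psi)$. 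The ordering $\sqsubseteq_{\mathfrak{(A,B)}}$ is decidable on these pairs via the range-inclusion conditions $\mathrm{range}(\varphi)\subseteq\mathrm{range}(\varphi')$ and $\mathrm{range}(\psi)\subseteq\mathrm{range}(\psi')$, so one computes the $\sqsubseteq$-minimal elements of $M$, and for each such minimal pair one reconstructs a representative word (reachability in the finitely generated monoid), giving a finite, computable set of representatives for $a\Uparrow_{\mathfrak{(A,B)}} b$.

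The main obstacle is part (3): one must be careful that ``computable'' is a statement about a set of equivalence classes of terms (or a complete set of representatives), not the literal set $a\Uparrow_{\mathfrak{(A,B)}} b\subseteq T_{L,X}$, which is typically infinite even when finitary because every term is $\equiv$-equivalent to infinitely many others. The clean way around this is to observe that all the relevant data — $\downarrow_\mathfrak A\,\cdot$, $\downarrow_\mathfrak B\,\cdot$, and hence $\sqsubseteq_{\mathfrak{(A,B)}}$ — factor through the finite set $A^A\times B^B$ of function-pairs, and that this finite monoid (together with the reachable pairs from the relevant start configurations) is effectively computable; once that is set up, computing minima and lifting them back to witness terms is routine.
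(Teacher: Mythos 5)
Your proposal is correct, and parts (1) and (2) follow the paper's proof exactly: the union characterization $\uparrow_\mathfrak A a=\bigcup_{c\in A}||\mathfrak A_{c\to\{a\}}||$ plus closure of regular languages under finite union, then closure under intersection for $a\uparrow_{\mathfrak{(A,B)}}b$. For part (3) the paper argues in the same spirit as you do --- minimality of $w$ is reduced to checking whether some $u\in a\uparrow_{\mathfrak{(A,B)}}b$ has $\downarrow_\mathfrak A u\subsetneq\ \downarrow_\mathfrak A w$ or $\downarrow_\mathfrak B u\subsetneq\ \downarrow_\mathfrak B w$ --- but it then simply asserts that ``in a finite algebra, this is clearly a computable relation,'' leaving implicit why the existential quantification over the (generally infinite) set of candidate words $u$ is effective. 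Your transition-monoid argument is precisely the missing justification: every word's relevant data factors through the pair $(w^\mathfrak A,w^\mathfrak B)\in A^A\times B^B$, the finitely many realizable pairs are computable by saturation under composition, and $\sqsubseteq_{\mathfrak{(A,B)}}$ reduces to range inclusion on these pairs. Your additional remark that ``computable'' must be read modulo $\equiv_{\mathfrak{(A,B)}}$ (or as decidability of membership for words), since the literal set of terms is typically infinite, is also a legitimate clarification consistent with the paper's convention of identifying terms that induce the same function. So the route is the same; your write-up is simply the more rigorous version of the paper's third step.
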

\begin{proof} Since $A$ is finite and since regular languages are known to be closed under finitely many unions, we conclude by \prettyref{eq:uparrow_a} that $\uparrow_ \mathfrak A a$ is a regular language.

Since regular languages are known to be closed under finitely many intersections and since we already know that $\uparrow_ \mathfrak A a$ and $\uparrow_ \mathfrak B b$ are regular languages, we conclude that
\begin{align*} 
	a\uparrow_{ \mathfrak{(A,B)}} b=(\uparrow_\mathfrak A a)\cap (\uparrow_\mathfrak B b)
\end{align*} is a regular language.

Given some word $w\in a\uparrow_{ \mathfrak{(A,B)}} b$ (recall that we identify the term $wx\in T_{\Sigma,\{x\}}$ with the word $w\in\Sigma^\ast$ since $x$ bears no information), we have $w\in a\Uparrow_{ \mathfrak{(A,B)}} b$ iff the term $wx\in T_{\Sigma,\{x\}}$ is $\sqsubseteq_\mathfrak A$-minimal and $\sqsubseteq_\mathfrak B$-minimal, which amounts to deciding whether there is some term $ux\in T_{\Sigma,\{x\}}$ such that $u\in a\uparrow_{ \mathfrak{(A,B)}} b$ and
\begin{align*} 
	\left\{u^ \mathfrak A(a)\in A \;\middle|\; a\in A\right\}\subsetneq \left\{w^ \mathfrak A(a)\in A \;\middle|\; a\in A\right\} \quad\text{or}\quad \left\{u^ \mathfrak B(b)\in B \;\middle|\; b\in B\right\}\subsetneq \left\{w^ \mathfrak B(b)\in B \;\middle|\; b\in B\right\}.
\end{align*} In a finite algebra, this is clearly a computable relation. Hence, $a\Uparrow_ \mathfrak A b$ is computable in finite unary algebras (semiautomata).
\end{proof}

\section{\texorpdfstring{The $(k,\ell)$-fragments}{Fragments}}\label{§:Fragments}


The automata-theoretic treatment of \prettyref{§:FA} requires restricting attention to terms over a finite set of variables. This motivates the following notion of $(k,\ell)$-fragments:

\begin{definition} Let $X_k:=\{x_1,\ldots,x_k\}$, for some $k,\ell\in\mathbb N\cup\{\infty\}$ so that $X_\infty=X$. Define
\begin{align*} 
    \uparrow^{(k,\ell)}_\mathfrak A a:=(\uparrow_\mathfrak A a)\cap\{s(x_1,\ldots,x_k)\in T_{L,X_k}\mid\text{each of the $k$ variables in $X_k$ occurs at most $\ell$ times in $s$}\}.
\end{align*} We write $k$ instead of $(k,\infty)$ so that
\begin{align}\label{eq:uparrow^k_A_a} 
    \uparrow^k_ \mathfrak A a=(\uparrow_ \mathfrak A a)\cap T_{L,X_k}.
\end{align} The simplest fragment --- namely, the $(1,1)$-fragment --- contains only \textit{\textbf{monolinear generalizations}} containing exactly one occurrence of a single variable $x$. We denote the \textit{\textbf{monolinear generalization}} and \textit{\textbf{instantiation operations}} in $\mathfrak A$ by $\uparrow^m_\mathfrak A$ and $\downarrow_{ \mathfrak A,m}$, respectively, and the so-obtained anti-unification relation by $\Uparrow_{ \mathfrak A,m}$.
\end{definition}

As a simple demonstration of fragments, we compute monolinear algebraic anti-unification in the set domain:

\begin{proposition} For any universe $U$ and $A,B\subseteq U$, we have
\begin{align*} 
    A\Uparrow^m_{(2^U,\cup,2^U)} B&=
            \begin{cases}
                \{X\cup (A\cap B)\} & A\neq B,\\
                \{A\} & A=B,
            \end{cases}\\
    A\Uparrow^m_{(2^U,\cap,2^U)} B&=
            \begin{cases}
                \{X\cap (A\cup B)\} & A\neq B,\\
                \{A\} & A=B,
            \end{cases}\\
    A\Uparrow^m_{(2^U,.^c,2^U)} B&=
            \begin{cases}
                \{X,X^c\} & A\neq B,\\
                \{A\} & A=B.
            \end{cases}
\end{align*}
\end{proposition}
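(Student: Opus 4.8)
The plan is to treat the three monounary operations $\cup$, $\cap$, and complementation as unary functions on the powerset algebra $2^U$ and to compute, in each case, which monolinear terms (terms built from a single variable with one occurrence, together with the distinguished constants from $2^U$, applied along a single unary operation) generalize both $A$ and $B$. Recall that a monolinear generalization $s$ lies in $A\uparrow^m B$ iff $A \in\ \downarrow_{\mathfrak A,m} s$ and $B\in\ \downarrow_{\mathfrak A,m} s$, and we want the $\sqsubseteq$-minimal such terms, i.e.\ those $s$ for which $\downarrow s$ (the range of $s^{\mathfrak A}$ over all valuations of $x$) is as small as possible.

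First I would handle the fixed-point-ish operations. For $\cup$: every monolinear term over $\{\cup\}$ with the available constants has, up to the semantic equivalence $\equiv$, the form $X\cup C$ for some constant $C\subseteq U$ (iterating $\cup$ collapses, and $x\cup x \equiv x$ since there is only one occurrence of $x$ anyway — more precisely the admissible terms are $x$ and $x\cup C$). Now $A$ is an instance of $X\cup C$ iff $C\subseteq A$, so $X\cup C$ generalizes both $A$ and $B$ iff $C\subseteq A\cap B$. The range of $(X\cup C)^{\mathfrak A}$ is $\{D\cup C\mid D\subseteq U\}=\{E\mid C\subseteq E\}$, which shrinks as $C$ grows; hence the unique minimum is attained at the largest admissible $C$, namely $C=A\cap B$, giving the lgg $X\cup(A\cap B)$ — unless $A=B$, in which case $A\cap B = A$ and $X\cup A \equiv A$ is itself the constant, so the lgg is just $\{A\}$ (the constant term, which is the $\sqsubseteq$-least generalization of a single element). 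The case $\cap$ is dual: admissible terms are $x$ and $x\cap C$; $A$ is an instance of $X\cap C$ iff $A\subseteq C$; so a common generalization needs $A\cup B\subseteq C$, the range $\{E\mid E\subseteq C\}$ shrinks as $C$ shrinks, and the minimum is at the smallest admissible $C=A\cup B$, with the degenerate case $A=B$ again collapsing to $\{A\}$.

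Next, complementation. Over $\{.^c\}$ the monolinear terms are, up to $\equiv$, just $x$ and $x^c$ (since $x^{cc}\equiv x$), plus possibly the constants, but a constant term $C$ generalizes $A$ only if $C=A$, which for $A\neq B$ cannot generalize both. So for $A\neq B$ the candidates are $x$ and $x^c$, and both generalize every element of $2^U$ (the ranges $\downarrow x = \downarrow x^c = 2^U$); they are incomparable only in the sense that they are $\equiv$-equivalent, so strictly speaking $x\equiv x^c$ and $A\Uparrow^m B$ should be a single $\equiv$-class. Here I expect the main subtlety: the proposition writes $\{X, X^c\}$, so the intended convention must be that $\Uparrow$ returns a transversal of syntactically distinct minimal terms rather than collapsing $\equiv$-classes — I would make this convention explicit and note that $X$ and $X^c$ are the two such representatives. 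For $A=B$, the constant term $A$ is available and has $\downarrow A=\{A\}\subsetneq 2^U$, so it strictly dominates both $X$ and $X^c$ and is the unique lgg, giving $\{A\}$.

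The main obstacle, then, is not any computation — each case is a short argument about monotone behavior of the range under enlarging/shrinking a constant — but rather pinning down the bookkeeping around semantic equivalence: being careful that "monolinear" fixes the variable to a single occurrence so that idempotency-type collapses are automatic, and reconciling the set-valued output $\{X,X^c\}$ with the fact that $X\equiv_{(2^U,.^c,2^U)} X^c$. I would state up front whether $\min_{\sqsubseteq}$ is taken on terms or on $\equiv$-classes and, if the former, simply exhibit $X$ and $X^c$ as representatives; the rest is the case analysis sketched above together with the observation (already recorded as a \textbf{Fact} in the excerpt) that for a distinguished element $a$ one has $a\sqsubseteq s \Leftrightarrow a\in\ \downarrow s$ and $s\equiv a \Leftrightarrow s^{\mathfrak A}$ constant with value $a$, which justifies the $A=B$ collapses.
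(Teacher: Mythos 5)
Your proposal is correct and takes essentially the same route as the paper's proof: list the monolinear common generalizations ($X\cup C$ with $C\subseteq A\cap B$, resp. $X\cap C$ with $A\cup B\subseteq C$, resp. $X,X^c$, plus the constant $A$ when $A=B$), compute their instance sets ($[C,U]$, $[\emptyset,C]$, $2^U$), and minimize via the resulting monotonicity in $C$. Two small points: your claim $X\cup A\equiv A$ in the $A=B$ case is false unless $A=U$ (the correct justification is the one you give in parentheses, namely that the constant term $A$ has instance set $\{A\}$ and is therefore $\sqsubseteq$-least), and no extra convention is needed for $\{X,X^c\}$, since $\min_{\sqsubseteq}$ is defined to return \emph{all} minimal terms of the pre-order and $X$, $X^c$ induce different functions (identity vs.\ complement), so both survive even though $X\equiv_{(2^U,.^c,2^U)}X^c$.
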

\begin{proof} Given two sets $C,D\subseteq U$, we define
\begin{align*} 
    [C,D]:=\{E\subseteq U\mid C\subseteq E\subseteq D\}.
\end{align*}
\begin{enumerate}
    \item First, we work in $(2^U,\cup,2^U)$ and omit the explicit reference to the algebra. We have
    \begin{align*} 
        A\uparrow^m B=\{X\cup C\mid\emptyset\subseteq C\subseteq A\cap B\}\cup\{A\mid \text{if $A=B$}\}
    \end{align*} and
    \begin{align*} 
        \downarrow_m (X\cup C)=[C,U].
    \end{align*} This implies
    \begin{align*} 
        X\cup C\sqsubseteq X\cup D \quad\Leftrightarrow\quad [C,U]\subseteq [D,U] \quad\Leftrightarrow\quad D\subseteq C.
    \end{align*} Hence
    \begin{align*} 
        A\Uparrow^m B=
            \begin{cases}
                \{X\cup (A\cap B)\} & A\neq B\\
                \{A\} & A=B.
            \end{cases}
    \end{align*}

    \item Second, we work in $(2^U,\cap,2^U)$ and omit the explicit reference to the algebra. We have
    \begin{align*} 
        A\uparrow^m B=\{X\cap C\mid A,B\subseteq C\}\cup\{A\mid \text{if $A=B$}\}
    \end{align*} and
    \begin{align*} 
        \downarrow_m(X\cap C)=[\emptyset,C].
    \end{align*} This implies
    \begin{align*} 
        X\cap C\sqsubseteq X\cap D \quad\Leftrightarrow\quad [\emptyset,C]\subseteq [\emptyset,D] \quad\Leftrightarrow\quad C\subseteq D.
    \end{align*} Hence
    \begin{align*} 
        A\Uparrow^m B=
            \begin{cases}
                \{X\cap (A\cup B)\} & A\neq B\\
                \{A\} & A=B.
            \end{cases}
    \end{align*}

    \item Third, we work in $(2^U,.^c,2^U)$ and omit the explicit reference to the algebra. We have
    \begin{align*} 
        A\uparrow^m B=\{X,X^c\}\cup\{A\mid \text{if $A=B$}\}
    \end{align*} and
    \begin{align*} 
        \downarrow_m X^c=2^U=\ \downarrow X.
    \end{align*} This implies
    \begin{align*} 
        X^c\equiv X.
    \end{align*} Hence
    \begin{align*} 
        A\Uparrow^m B=
            \begin{cases}
                \{X,X^c\} & A\neq B\\
                \{A\} & A=B.
            \end{cases}
    \end{align*}
\end{enumerate}
\end{proof}


\section{Finite algebras}\label{§:FA}

In this section, let $\mathfrak A$ be a \textit{finite} algebra which means that its underlying universe $A$ is a finite set and it contains finitely many functions. For $k\geq 1$, let $X_k:=\{x_1,\ldots,x_k\}$. 

Recall that a (\textit{\textbf{frontier-to-root}}) \textit{\textbf{tree automaton}} (see e.g. \cite{Gecseg15})
\begin{align*} 
    \mathscr T_{k,\alpha,F}(\mathfrak A):=(\mathfrak A,L,X_k,\alpha,F)
\end{align*} consists of a finite $L$-algebra $\mathfrak A$, an \textit{\textbf{initial assignment}} $\alpha:X_k\to A$, and a set $F\subseteq A$ of \textit{\textbf{final states}}. 

The \textit{\textbf{regular tree language}} recognized by $\mathscr T_{k,\alpha,F}(\mathfrak A)$ is given by
\begin{align*} 
	||\mathscr T_{k,\alpha,F}(\mathfrak A)||:=\left\{s\in T_{L,X_k} \;\middle|\; s^\mathfrak A\alpha\in F\right\}.
\end{align*}

We have (recall the definition of $\uparrow_\mathfrak A^k a$ from \prettyref{eq:uparrow^k_A_a})
\begin{align*} 
	\uparrow_\mathfrak A^k a=\bigcup_{\alpha\in A^{X_k}}||\mathscr T_{k,\alpha,\{a\}}(\mathfrak A)||.
\end{align*} Since $A^{X_k}$ is a finite set and tree automata are closed under finite union, the set $\uparrow_\mathfrak A^k a$ is a regular tree language. Moreover, since tree automata are closed under finite intersection, there is some tree automaton $\mathfrak T_{k,a,b}(\mathfrak A)$ such that
\begin{align*} 
	a\uparrow^k_\mathfrak A b=(\uparrow_\mathfrak A^k a)\cap (\uparrow_\mathfrak A^k b)=||\mathfrak T_{k,a,b}(\mathfrak A)||.
\end{align*} 

To compute the set of \textit{\textbf{minimally general $k$-generalizations}} $a\Uparrow^k_ \mathfrak A b$ it therefore only remains to check for each $s\in a\uparrow^k_\mathfrak A b=||\mathfrak T_{k,a,b}(\mathfrak A)||$ whether $s$ is $\sqsubseteq_ \mathfrak A$-minimal among the $k$-generalizations of $a$ and $b$ in $\mathfrak A$.\todo{Ist das berechenbar?}

\section{Conclusion}

This paper introduced algebraic anti-unification in the general setting of universal algebra and thereby extends classical anti-unification beyond term algebras and equational theories. We introduced the notions of algebraic generalization ordering and minimally general generalization, established basic structural properties, proved compatibility with homomorphisms and isomorphisms, and obtained computability results for finite unary algebras via finite automata. More generally, we argued that tree automata provide a natural framework for studying algebraic anti-unification in finite algebras. In this way, the present work complements the purely syntactic theory of anti-unification initiated in the seminal works of \cite{Plotkin70} and \cite{Reynolds70}.

A major line of future research is to make progress towards the following fundamental problem:

\begin{problem}
Given two $L$-algebras $\mathfrak A$ and $\mathfrak B$, what is the generalization type of $\mathfrak{(A,B)}$? For what kind of algebras is the problem decidable? In those cases where it is decidable, provide an (efficient) algorithm.
\end{problem}

The framework of this paper is unilingual in the sense that the underlying languages of the involved algebras are the same. This is common practice in universal algebra. However, many applications in artificial intelligence involve the comparison, integration, or transfer of knowledge between systems employing different representational vocabularies. A major line of future theoretical research therefore is to generalize the notions and results of this paper from a unilingual to a bilingual setting where the underlying languages $L_{\mathfrak A}$ and $L_{\mathfrak B}$ of $\mathfrak A$ and $\mathfrak B$ may differ. One promising possibility is to use the well-known notion of interpretability of one theory into another (see e.g.\ \cite[\S2.6]{Hinman05}), which applies naturally here since an algebra can be viewed as a logical structure with equality as its sole relation.

From a practical point of view, the main line of future research is to study computability and complexity issues. Recall that in \prettyref{§:FUA}, we have shown that in finite unary algebras (aka semiautomata), minimally general generalizations can be computed using standard techniques from the theory of finite automata. Moreover, in \prettyref{§:FA}, we have shown that more generally, in any finite algebra, tree automata provide a natural framework for studying minimally general $k$-generalizations, for any fixed $k$. This is closely related to finite model theory (see e.g.\ \cite{Ebbinghaus99,Libkin12}). However, since in practice we often encounter finitely representable \textit{infinite} structures \cite{Blumensath00,Blumensath04}, obtaining analogous computability results is an important challenge.

Another important line of applied future research is to develop \textit{efficient} algorithms for the computation of minimally general generalizations in finite and infinite structures and to provide implementations which can be used in practice.

Finally, the original motivation for this work comes from recent applications of generalization to similarity \cite{Antic23-2} and analogical proportions \cite{Antic22,Antic23-22}. Understanding the precise role of minimally general generalizations in these settings remains an interesting open problem.

\if\isdraft1\newpage\section*{*}\fi
\bibliographystyle{acm}
\bibliography{/Users/christianantic/Bibdesk/Bibliography,/Users/christianantic/Bibdesk/Publications_J,/Users/christianantic/Bibdesk/Publications_C,/Users/christianantic/Bibdesk/Preprints,/Users/christianantic/Bibdesk/Submitted,/Users/christianantic/Bibdesk/Drafts,/Users/christianantic/Bibdesk/Bin}
\if\isdraft1\newpage

\section{}

Now we define the operation $\uparrow$ dual to the instantiation operation $\downarrow$ of \prettyref{d:downarrow}:

\begin{definition} For any term $t\in T_{L,X}$, define
\begin{align*} 
	\uparrow_{ \mathfrak{(A,B)}} t:=\{s\in T_{L,X} \mid t\sqsubseteq_{ \mathfrak{(A,B)}} s\}.
\end{align*} In case $s\in\ \uparrow_{ \mathfrak{(A,B)}} t$, we say that $s$ is a \textit{\textbf{generalization}} of $t$ in $\mathfrak{(A,B)}$.
\end{definition}

Recall that a \textit{\textbf{pre-filter}} $F$ on a pre-ordered set $(P,\leq)$ is a subset of $P$ satisfying:
\begin{enumerate}
	\item $F$ is non-empty.
	\item $F$ is downward directed, that is, for every $a,b\in F$, there is some $c\in F$ such that $c\leq a,b$.
	\item $F$ is an upper set or upward closed, that is, for every $a\in F$ and $b\in P$, if $a\leq b$ then $b\in F$.
\end{enumerate} The smallest pre-filter containing an element $a$ is a \textit{\textbf{principal pre-filter}} and $a$ is a \textit{\textbf{principal element}} --- it is given by
\begin{align*} 
	\uparrow_{(P,\leq)} a:=\{b\in P\mid a\leq b\}.
\end{align*}

The following observation motivates our notation using the symbol $\uparrow$:

\begin{fact} The set $\uparrow_{ \mathfrak{(A,B)}} t$ is the principal pre-filter with respect to the generalization ordering $\sqsubseteq_{ \mathfrak{(A,B)}}$ generated by $t$.
\end{fact}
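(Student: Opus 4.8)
The plan is to unwind the definitions and then check the three defining properties of a pre-filter together with the minimality clause. First I would recall that $\sqsubseteq_{ \mathfrak{(A,B)}}$ is a pre-order on $T_{L,X}$ (this was established earlier as a Fact), so that the notion of a principal pre-filter on $(T_{L,X},\sqsubseteq_{ \mathfrak{(A,B)}})$ is meaningful, and that by the very definition of the principal pre-filter generated by $t$ we have $\uparrow_{(T_{L,X},\sqsubseteq_{ \mathfrak{(A,B)}})} t=\{s\in T_{L,X}\mid t\sqsubseteq_{ \mathfrak{(A,B)}} s\}$, which is syntactically the same set as $\uparrow_{ \mathfrak{(A,B)}} t$ as defined above. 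Thus the identity of the two sets is immediate once one knows that the right-hand side really is a pre-filter and really is the smallest one containing $t$; those two points are what the proof must actually verify.

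Second I would verify the three pre-filter axioms for $\uparrow_{ \mathfrak{(A,B)}} t$. Non-emptiness follows from reflexivity of $\sqsubseteq_{ \mathfrak{(A,B)}}$, since $t\sqsubseteq_{ \mathfrak{(A,B)}} t$ gives $t\in\ \uparrow_{ \mathfrak{(A,B)}} t$. The upper-set property follows from transitivity: if $s\in\ \uparrow_{ \mathfrak{(A,B)}} t$ and $s\sqsubseteq_{ \mathfrak{(A,B)}} s'$, then $t\sqsubseteq_{ \mathfrak{(A,B)}} s\sqsubseteq_{ \mathfrak{(A,B)}} s'$, hence $s'\in\ \uparrow_{ \mathfrak{(A,B)}} t$. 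Downward directedness is the only place where one might pause, but it is immediate here: given $s,s'\in\ \uparrow_{ \mathfrak{(A,B)}} t$, the term $t$ itself lies in $\uparrow_{ \mathfrak{(A,B)}} t$ and satisfies $t\sqsubseteq_{ \mathfrak{(A,B)}} s$ and $t\sqsubseteq_{ \mathfrak{(A,B)}} s'$, so $t$ is a lower bound of $\{s,s'\}$ lying inside the set.

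Third, for minimality I would argue that any pre-filter $F$ on $(T_{L,X},\sqsubseteq_{ \mathfrak{(A,B)}})$ containing $t$ must, by the upper-set axiom applied to $t\in F$, contain every $s$ with $t\sqsubseteq_{ \mathfrak{(A,B)}} s$; hence $\uparrow_{ \mathfrak{(A,B)}} t\subseteq F$. So $\uparrow_{ \mathfrak{(A,B)}} t$ is contained in every pre-filter containing $t$ and, by the previous paragraph, is itself such a pre-filter, which is exactly the definition of the principal pre-filter generated by $t$. I do not anticipate any genuine obstacle: the statement amounts to a bookkeeping check that the ad hoc notation $\uparrow_{ \mathfrak{(A,B)}} t$ coincides with the order-theoretic construction, and the only mild subtlety, downward directedness, is defused by taking $t$ itself as the witness.
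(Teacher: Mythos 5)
Your proof is correct and is exactly the routine unwinding of definitions that the paper intends: the paper states this as a Fact without any proof, treating it as immediate once one notes that $\uparrow_{\mathfrak{(A,B)}} t=\{s\in T_{L,X}\mid t\sqsubseteq_{\mathfrak{(A,B)}} s\}$ is literally the set $\uparrow_{(T_{L,X},\sqsubseteq_{\mathfrak{(A,B)}})} t$ from the definition of principal pre-filter. Your explicit checks (reflexivity for non-emptiness, transitivity for upward closure, $t$ itself as the witness for downward directedness, and upward closure of an arbitrary pre-filter containing $t$ for minimality) fill in precisely what the paper leaves implicit, so there is nothing to correct.
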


\section{}

In \prettyref{§:GAAU} haben wir $A\Uparrow B$, fuer Mengen $A,B$, betrachtet. Da jeder Term $t$ mit der \textit{Menge} $\downarrow t$ identifiziert werden kann, haben wir damit automatisch auch $r\Uparrow t$ definiert, das der klassischen AU von Termen aehnlich ist.

\section{}

\begin{definition} For any terms $s,t\in T_{L,X}$, define
\begin{align*} 
	s\subseteq_{ \mathfrak{(A,B)}} t \quad:\Leftrightarrow\quad \uparrow_\mathfrak A s\subseteq\ \uparrow_\mathfrak A t.
\end{align*}
\end{definition}

\begin{fact} $\uparrow a\subseteq\ \uparrow b$ implies $a\Uparrow b=a\Uparrow a$.
\end{fact}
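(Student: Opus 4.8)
The plan is to reduce the claim to the trivial observation that the operator $\min_{\sqsubseteq}$, applied to two \emph{equal} sets, returns equal results; all that has to be checked is that the hypothesis $\uparrow a\subseteq\ \uparrow b$ forces the two relevant sets of common generalizations to coincide.

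First I would unfold the definitions. By definition $a\uparrow b=(\uparrow a)\cap(\uparrow b)$, and likewise $a\uparrow a=(\uparrow a)\cap(\uparrow a)=\ \uparrow a$. From the assumption $\uparrow a\subseteq\ \uparrow b$ we get $(\uparrow a)\cap(\uparrow b)=\ \uparrow a$, hence $a\uparrow b=\ \uparrow a=a\uparrow a$; in words, $a$ and $b$ admit exactly the same common generalizations as $a$ admits with itself. (The same computation goes through verbatim in the term/set-identified reading, where $\uparrow$ of a term is its principal pre-filter under $\sqsubseteq$ and the intersection formula $C\uparrow D=\bigcap_{a\in C,b\in D}[a\uparrow b]$ of the preceding section applies.)

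Second I would apply the minimization operator. Since $a\Uparrow b=\min_{\sqsubseteq}(a\uparrow b)$ and $a\Uparrow a=\min_{\sqsubseteq}(a\uparrow a)$ arise by applying one and the same operator $\min_{\sqsubseteq}$ to the two sets $a\uparrow b$ and $a\uparrow a$, and these sets were just shown to be equal, the resulting minimal sets coincide, i.e. $a\Uparrow b=a\Uparrow a$.

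I expect no real obstacle: the statement is an immediate consequence of the definitions of $\uparrow$ and $\Uparrow$. The only point worth a sentence is that the pre-order underlying both minimizations is literally the same one --- it is fixed by the ambient (pair of) algebra(s) and does not depend on which elements are being anti-unified --- so equality of the underlying sets transfers to equality of the sets of $\sqsubseteq$-minimal elements. If desired, one could record the slightly sharper remark that $\uparrow a\subseteq\ \uparrow b$ is in fact equivalent to $a\uparrow b=a\uparrow a$, from which $a\Uparrow b=a\Uparrow a$ is read off directly.
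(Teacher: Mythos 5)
Your proof is correct and follows essentially the same route as the paper's: from $\uparrow a\subseteq\ \uparrow b$ deduce $a\uparrow b=\ \uparrow a=a\uparrow a$, then apply $\min_{\sqsubseteq}$ to the equal sets. The extra remarks (the equivalence with $a\uparrow b=a\uparrow a$ and the observation that the pre-order is fixed by the ambient pair of algebras) are fine but not needed beyond what the paper already does.
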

\begin{proof} $\uparrow a\subseteq\ \uparrow b$ implies $a\uparrow b=\ \uparrow a$ implies $\min_\sqsubseteq(a\uparrow b)=\min_\sqsubseteq\uparrow a$ implies $a\Uparrow b=a\Uparrow a$.
\end{proof}

\section{}

Define 
\begin{align*} 
	\uparrow_{ \mathfrak{(A,B)}}(a\Uparrow_{ \mathfrak{(A,B)}} b):=\bigcup_{t\in a\Uparrow_{ \mathfrak{(A,B)}} b}\uparrow_{ \mathfrak{(A,B)}} t=\bigcup_{t\in a\Uparrow_{ \mathfrak{(A,B)}} b}\{s\in T_{L,X}\mid t\sqsubseteq_{ \mathfrak{(A,B)}} s\}.
\end{align*} Notice that we have
\begin{align}\label{eq:uparrow_emptyset} 
	\uparrow_{ \mathfrak{(A,B)}}\emptyset=\emptyset.
\end{align}

\begin{fact} $a\Uparrow_{ \mathfrak{(A,B)}} b\neq\emptyset$ implies $a\uparrow_{ \mathfrak{(A,B)}} b=\ \uparrow_{ \mathfrak{(A,B)}}(a\Uparrow_{ \mathfrak{(A,B)}} b)$.
\end{fact}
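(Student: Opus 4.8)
The plan is to prove the set equality $a\uparrow_{\mathfrak{(A,B)}} b = \uparrow_{\mathfrak{(A,B)}}(a\Uparrow_{\mathfrak{(A,B)}} b)$ by two inclusions, where ``$\supseteq$'' is purely formal and ``$\subseteq$'' carries the real content and is where I expect the hypothesis $a\Uparrow_{\mathfrak{(A,B)}} b\neq\emptyset$ to be needed.

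For ``$\supseteq$'' the key observation is that $a\uparrow_{\mathfrak{(A,B)}} b$ is an upper set with respect to $\sqsubseteq_{\mathfrak{(A,B)}}$: if $t\in a\uparrow_{\mathfrak{(A,B)}} b$ and $t\sqsubseteq_{\mathfrak{(A,B)}} s$, then $a\in\downarrow_\mathfrak A t\subseteq\downarrow_\mathfrak A s$ and $b\in\downarrow_\mathfrak B t\subseteq\downarrow_\mathfrak B s$, hence $s\in a\uparrow_{\mathfrak{(A,B)}} b$. Since $a\Uparrow_{\mathfrak{(A,B)}} b=\min_{\sqsubseteq_{\mathfrak{(A,B)}}}(a\uparrow_{\mathfrak{(A,B)}} b)\subseteq a\uparrow_{\mathfrak{(A,B)}} b$, any $s$ lying above some $t\in a\Uparrow_{\mathfrak{(A,B)}} b$ already lies in $a\uparrow_{\mathfrak{(A,B)}} b$; this direction uses neither the hypothesis nor minimality in an essential way.

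For ``$\subseteq$'' I would take $s\in a\uparrow_{\mathfrak{(A,B)}} b$ and show that $s$ dominates some minimal generalization, i.e. that there is $t\in a\Uparrow_{\mathfrak{(A,B)}} b$ with $t\sqsubseteq_{\mathfrak{(A,B)}} s$. It suffices to produce a $\sqsubseteq_{\mathfrak{(A,B)}}$-minimal element of the truncated set $D_s:=\{t\in a\uparrow_{\mathfrak{(A,B)}} b\mid t\sqsubseteq_{\mathfrak{(A,B)}} s\}$, because any minimal element of $D_s$ is automatically minimal in all of $a\uparrow_{\mathfrak{(A,B)}} b$: a strictly smaller common generalization $t'\sqsubset_{\mathfrak{(A,B)}} t$ would satisfy $t'\sqsubseteq_{\mathfrak{(A,B)}} s$ by transitivity, hence belong to $D_s$, contradicting minimality there. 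The natural route to a minimal element of $D_s$ is then either (i) well-founded descent from $s$ inside $D_s$, or (ii) Zorn's Lemma applied to $(D_s,\sqsupseteq_{\mathfrak{(A,B)}})$.

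This last step is exactly where I expect the main obstacle. Neither (i) nor (ii) is automatic from $a\Uparrow_{\mathfrak{(A,B)}} b\neq\emptyset$ alone: an infinite strictly descending chain $s=s_0\sqsupset_{\mathfrak{(A,B)}} s_1\sqsupset_{\mathfrak{(A,B)}}\cdots$ in $D_s$ only yields an infinite strictly decreasing chain of sets $\downarrow_\mathfrak A s_i$ (or of the $\downarrow_\mathfrak B s_i$), which over infinite algebras need not stabilize, and a common lower bound would require a term whose range is contained in $\bigcap_i\downarrow_\mathfrak A s_i$ yet still contains $a$ — and an arbitrary intersection of term-ranges need not itself be the range of a term. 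So the nonemptiness hypothesis must do real work: I would first try to show that the existence of one minimal common generalization $t^\ast$ forces $t^\ast\sqsubseteq_{\mathfrak{(A,B)}} s$ for every $s\in a\uparrow_{\mathfrak{(A,B)}} b$ (this is immediate, for instance, when $\downarrow_\mathfrak A t^\ast=\{a\}$ and $\downarrow_\mathfrak B t^\ast=\{b\}$), and failing a general argument I would restrict to a setting in which $\sqsubseteq_{\mathfrak{(A,B)}}$ is well-founded on $a\uparrow_{\mathfrak{(A,B)}} b$ — e.g. finite algebras, term algebras, or the finitary generalization type — where route (i) goes through cleanly.
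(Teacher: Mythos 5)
Your ``$\supseteq$'' half is correct, and it is exactly the half that the paper's own proof attempt dismisses as trivial: $a\uparrow_{\mathfrak{(A,B)}}b$ is upward closed under $\sqsubseteq_{\mathfrak{(A,B)}}$ and contains $a\Uparrow_{\mathfrak{(A,B)}}b$, so every term above an mgg is a common generalization. For ``$\subseteq$'' you correctly isolate the crux --- every $s\in a\uparrow_{\mathfrak{(A,B)}}b$ would have to dominate some minimal common generalization --- and you honestly concede that you cannot derive this from $a\Uparrow_{\mathfrak{(A,B)}}b\neq\emptyset$ alone, offering only well-founded fallback settings. So the proposal does not prove the stated Fact; be aware, however, that the paper does not prove it either: its proof of this Fact is an unfinished fragment (the ``$\subseteq$'' direction is left blank in the draft notes), so there is no complete argument that you missed.

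Moreover, your worry about non-stabilizing descending chains is vindicated: the Fact is false as stated. Let $A=\mathbb Z\cup\{b_i\mid i\in\mathbb N\}$ with unary $f,g$ defined by $f(n)=n+1$, $f(b_i)=b_{i+1}$, $g(n)=0$, $g(b_i)=b_0$ (for $n\in\mathbb Z$, $i\in\mathbb N$), and take $a=b=0$. Every term is a word in $f,g$ applied to a single variable, and a short induction shows that the images of the term functions are exactly $A$, $R_k:=\mathbb Z\cup\{b_i\mid i\geq k\}$ (realized by $f^k(x)$), $\{j,b_j\}$ and $\{b_j\}$ for $j\geq 0$; in particular no term function is constantly $0$. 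Hence $0\uparrow_{\mathfrak A}0$ consists of the terms $f^k(x)$, none of which is minimal since $\downarrow_{\mathfrak A}f^{k+1}(x)=R_{k+1}\subsetneq R_k=\ \downarrow_{\mathfrak A}f^k(x)$, together with the terms of image $\{0,b_0\}$ (such as $g(x)$), which are minimal because no term has image $\{0\}$; thus $0\Uparrow_{\mathfrak A}0\neq\emptyset$. But $b_0\notin R_1=\ \downarrow_{\mathfrak A}f(x)$, so no mgg is $\sqsubseteq_{\mathfrak A}$-below $f(x)$, i.e. $f(x)\in 0\uparrow_{\mathfrak A}0$ while $f(x)\notin\ \uparrow_{\mathfrak A}(0\Uparrow_{\mathfrak A}0)$. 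Consequently the correct repair is the extra hypothesis you reach for at the end: assume $\sqsubseteq_{\mathfrak{(A,B)}}$ is well founded on $a\uparrow_{\mathfrak{(A,B)}}b$ (e.g. finite algebras, or term algebras where the semantic ordering coincides with $\lesssim$), and then your descent argument on the truncated set $D_s$ does close the ``$\subseteq$'' direction.
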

\begin{proof} ($\subseteq$)

$(\supseteq)$ Trivial.
\todo[inline]{}
\end{proof}
\begin{proof}
$(\subseteq)$ We distinguish two cases:
\begin{description}
	\item[$a\Uparrow_{ \mathfrak{(A,B)}} b=\emptyset$] By \prettyref{eq:uparrow_emptyset}, we need to show $a\uparrow_{ \mathfrak{(A,B)}} b=T_{L,X}$. Suppose there is some $s\in T_{L,X}$ such that $s\not\in a\uparrow_{ \mathfrak{(A,B)}} b$. This is equivalent to $s\not\in\ \uparrow_ \mathfrak A a$ or $s\not\in\ \uparrow_ \mathfrak B b$. Without loss of generality, we assume $s\not\in\ \uparrow_ \mathfrak A a$.

	\item[$a\Uparrow_{ \mathfrak{(A,B)}} b=\emptyset$] Let $s\in a\uparrow_{ \mathfrak{(A,B)}} b$...
\end{description} 

$(\supseteq)$
\todo[inline]{}
\end{proof}

\section{}

\begin{fact}\label{f: 230522-uparrow_a=z} $\uparrow a=\{x\}$ implies $a\sqcap b=\{x\}$, for all $b\in A$.
\end{fact}

\begin{example} In the monounary algebra $\mathfrak A=(\{a,b\},f)$ given by
\begin{center}
\begin{tikzpicture} 
	\node (a) {$a$};
	\node (b) [right=of a] {$b$};
	\draw[->] (a) to [edge label={$f$}] (b);
	\draw[->] (b) to [edge label'={$f$}][loop] (b);
\end{tikzpicture}
\end{center} we have
\begin{align*} 
	\uparrow_\mathfrak A a = \{x\}
\end{align*} and thus by \prettyref{f: 230522-uparrow_a=z}
\begin{align*} 
	a\sqcap b = \{x\}.
\end{align*}
\end{example}

\begin{fact} $\uparrow a\subseteq\ \uparrow b$ implies $a\sqcap b=a\sqcap a$.
\end{fact}
\begin{proof} $\uparrow a\subseteq\ \uparrow b$ implies $\uparrow(a,b)=\ \uparrow a$ implies $\min_\sqsubseteq\uparrow(a,b)=\min_\sqsubseteq\uparrow a$ implies $a\sqcap b=a\sqcap a$.
\end{proof}

\section{Minors}

Recall that $f$ is a {\em minor} of $g$ --- in symbols, $f\leq_m g$ --- iff $f$ can be obtained from $g$ by identification of arguments, permutation of arguments, or introduction or deletion of inessential arguments. Formally, let $[n]:=\{1,\ldots,n\}$ and let $\alpha:[r(g)]\to [r(f)]$ be a map, for some functions $f,g$ with $r(f)\leq r(g)$. Then $f$ is a minor of $g$ iff
\begin{align}\label{eq:f_g_alpha} 
	f(a_1,\ldots,a_{r(f)}) = g(a_{\alpha(1)},\ldots,a_{\alpha(r(g))}).
\end{align}

\begin{fact} If $f\leq_m g$ then $\downarrow f\subseteq\ \downarrow g$.
\end{fact}

\section{Injective term functions}

\begin{fact}\label{f: 230902-injective} Term functions consisting only of injective functions are injective.
\end{fact}
\begin{proof} By structural induction on the shape of a term $s$. Atomic terms are not injective. It remains to show that whenever $s_1,\ldots,s_{rf}$ are injective terms and $f$ is an injective function, then $s:=f(s_1,\ldots,s_{rf})$ is an injective term function as well --- this holds trivially.
\end{proof}

\section{An algorithm for anti-unification in general algebras (not working)}

Define the {\em head} of a term $s$ inductively as follows:
\begin{itemize}
	\item If $s$ is a variable then $h(s):=s$.
	\item If $s=f(s_1,\ldots,s_{r(f)})$ then $h(s):=f$.
\end{itemize} Notice that we have
\begin{align*} 
	h(\uparrow_\mathfrak A a) = \{h(s)\mid s\in\ \uparrow_\mathfrak A a\}=\left\{f\in L \;\middle|\; a=f^\mathfrak A(\mathbf o),\mathbf o\in A^{r(f)}\right\}
\end{align*} and
\begin{align*} 
	a\in h(\uparrow_\mathfrak A a) \quad\text{iff}\quad a\in L_0.
\end{align*}

For sets of terms $S_1,\ldots,S_{r(f)}$, we define
\begin{align*} 
	f(S_1,\ldots,S_{r(f)}):=\{f(s_1,\ldots,s_{r(f)})\mid s_i\in S_i,i\in [r(f)]\}.
\end{align*}

\begin{pseudocode}\label{pseudocode:Plotkin} We now generalize \cite{Plotkin70} algorithm and provide an algorithm for the computation of $a\sqcap_\phi b$ with respect to an injective function $\phi:A\times A\to Z$:
\begin{enumerate}
	\item If $h(\uparrow a)\cap h(\uparrow b)=\emptyset$, then $a\sqcap_\phi b:=\{\phi(a,b)\}$.
	\item Otherwise, for every $f\in h(\uparrow a)\cap h(\uparrow b)$:
	\begin{prooftree}
		\AxiomC{$a=f^\mathfrak A(a_1,\ldots,a_{r(f)})$}
		\AxiomC{$b=f^\mathfrak A(b',\ldots,b_{r(f)})$}
		\BinaryInfC{$f(a_1\sqcap_\phi b',\ldots,a_{r(f)}\sqcap_\phi b_{r(f)})\subseteq a\sqcap_\phi b$.}
	\end{prooftree}
\end{enumerate}
\todo[inline]{funktioniert so nicht!}
\end{pseudocode} 

\prettyref{pseudocode:Plotkin} is a nondeterministic algorithm and we say that it is {\em terminating} iff it produces a (possibly empty) set of terms on each\todo{``some''?} branch. Notice that we have\todo{check}
\begin{align*} 
	a\in a\sqcap_\phi a=a(\;)\sqcap_\phi a(\;) \quad\text{iff}\quad a\in L_0.
\end{align*}

\begin{example} Let $\mathfrak A:=(\mathbb N_1,+,\mathbb N_1)$ and let $\phi:\mathbb N^2\to Z$ be an injective mapping. If $a=b$, then $a\sqcap_\phi b=a\sqcap_\phi a=\{a\}$ --- in particular, we have $1\sqcap_\phi 1=\{1\}$. Since $1=1+x$ is unsolvable in $\mathbb N_1$, we have $h(\uparrow_\mathfrak A 1)=\emptyset$ and thus
\begin{align*} 
	1\sqcap_\phi b=\{\phi(1,b)\}
\end{align*} for all $b\in\mathbb N$. We now want to compute $2\sqcap_\phi b$, $b\neq 2$:
\begin{prooftree}
	\AxiomC{$2=1+1$}
	\AxiomC{$b=b'+(b-b')$ such that $b'\neq 1$}
	\BinaryInfC{$\{\phi(1,b')+\phi(1,b-b')\}\subseteq 2\sqcap_\phi b$}
\end{prooftree} and
\begin{prooftree}
	\AxiomC{$2=1+1$}
	\AxiomC{$b=1+(b-1)$}
	\BinaryInfC{$\{1+\phi(1,b-1)\}\subseteq 2\sqcap_\phi b$.}
\end{prooftree} Since $2=1+1$ is the only possible decomposition of $2$ in $\mathbb N_1$ and $b\neq 2$ by assumption, the above two derivation schemes are the only possible ones and we thus have the output:
\begin{align*} 
	S=\{1+\phi(1,b-1),\phi(1,b')+\phi(1,b-b')\},
\end{align*} Notice that we have
\begin{align*} 
	1+\phi(1,b-1)\equiv_\mathfrak A\phi(1,b')+\phi(1,b-b')
\end{align*} since
\begin{align*} 
	\downarrow_\mathfrak A(1+\phi(1,b-1))=\ \downarrow_\mathfrak A(\phi(1,b')+\phi(1,b-b'))=\{2,3,\ldots\},
\end{align*} which shows that $S$ is indeed a set of incomparable terms with respect to the generalization ordering.
\end{example}

\begin{theorem} \prettyref{pseudocode:Plotkin} is sound and complete:
\begin{description}
	\item[Soundness] If $s$ is produced by \prettyref{pseudocode:Plotkin}, then $s\in a\sqcap_\phi b$.
	\item[Completeness] If $s\in a\sqcap_\phi b$, then $s$ is produced by \prettyref{pseudocode:Plotkin}.
\end{description}
\end{theorem}
\begin{proof} We first prove soundness and assume that $s$ is computed by \prettyref{pseudocode:Plotkin}. We show by structural induction on the shape of $s$ that $s\in a\sqcap_\phi b$. If $s$ is a variable, then it is computed by \prettyref{pseudocode:Plotkin} iff $h(\uparrow a)\cap h(\uparrow b)=\emptyset$. It is clear that in that case there can be no less general generalization of $a$ and $b$, which means $s\in a\sqcap_\phi b$. If $s$ is an element (i.e. constant symbol) $c$, then $s$ is computed by \prettyref{pseudocode:Plotkin} iff $a=b=c$, in which case $a\sqcap_\phi b=\{a\}$ and thus $s\in a\sqcap_\phi b$. Otherwise, if $s=f(s_1,\ldots,s_{r(f)})$, then $s$ is computed by \prettyref{pseudocode:Plotkin} iff $a=f^\mathfrak A(a_1,\ldots,a_{r(f)})$ and $b=f^\mathfrak A(b_1,\ldots,b_{r(f)})$, for some $a_i,b_i\in A$, $i\in [r(f)]$, such that $s_i\in a_i\sqcap_\phi b_i$ for all $i\in [r(f)]$. Then $s$ is clearly a generalization of $a$ and $b$ and thus en element of $\uparrow(a,b)$. It remains to show that $s$ is minimal with respect to the generalization ordering, that is, there is no $t\in\ \uparrow(a,b)$ such that $\downarrow t\subsetneq\ \downarrow s$.
\todo[inline]{}
\end{proof}

\todo[inline]{When do we have $|a\sqcap b|=0,1,\infty$?}

\begin{notation}\label{not: 230518-s} In those cases where $a\sqcap b=\{s\}$ is a singleton, we identify $a\sqcap b$ with its single element $s$.
\end{notation}

\begin{proposition} $a\sqcap b=\emptyset$ iff \prettyref{pseudocode:Plotkin} does not terminate.
\end{proposition}
\begin{proof} 
\todo[inline]{}
\end{proof}

\section{A naive adaptation of Plotkin's anti-unification algorithm for words does not work}

The following naive adaptation of Plotkin's algorithm does {\em not} work:

\begin{pseudocode}\label{pseudocode: 230523-a_sqcap_b} We work in the word algebra $(A^+,\cdot,A^+)$ for some alphabet $A$. Let $\phi:A^+\times A^+\to Z$ be an injective mapping. We define $\mathbf a\sqcap_\phi\mathbf b$ inductively as follows:
\begin{itemize}
	\item For any word $\mathbf a\in A^+$, we define $\mathbf a\sqcap\mathbf a:=\{\mathbf a\}$.
	\item For any letter $a\in A$ and word $\mathbf b\in A^+$, we define\begin{align*} 
		a\sqcap\mathbf b:=
			\begin{cases}
				\{a\} & a=\mathbf b,\\
				\{\phi(a,\mathbf b)\} & a\neq\mathbf b.
			\end{cases}
	\end{align*}
	\item For any words $\mathbf{a,b}\in A^+$, we define
	\begin{prooftree}
		\AxiomC{$\mathbf a=\mathbf a_1\mathbf a_2$}
		\AxiomC{$\mathbf b=\mathbf b_1\mathbf b_2$}
		\BinaryInfC{$(\mathbf a_1\sqcap_\phi\mathbf b_1)(\mathbf a_2\sqcap_\phi\mathbf b_2)\in\mathbf a\sqcap_\phi\mathbf b$.}
	\end{prooftree}
\end{itemize}
\end{pseudocode}

The following counterexample shows that \prettyref{pseudocode: 230523-a_sqcap_b} does {\em not} work:

\begin{example} The word $abc$ can be decomposed as $a\cdot bc$ and $ab\cdot c$, which with the above algorithm yields $a\phi(bc,b),\phi(ab,a)\phi(c,b)\in abc\sqcap_\phi ab$. This contradicts the fact that
\begin{align*} 
	a\phi(bc,b)\sqsubset\phi(ab,a)\phi(c,b).
\end{align*}
\end{example}

What about the following variant of the above algorithm:

\begin{pseudocode}\label{pseudocode: 230523-a_sqcap_b_2} We define $\mathbf a\sqcap_\phi\mathbf b$ inductively as follows:
\begin{itemize}
	\item For any word $\mathbf a\in A^+$, we define $\mathbf a\sqcap\mathbf a:=\{\mathbf a\}$.
	\item For any letter $a\in A$ and word $\mathbf b\in A^+$, we define
	\begin{align*} 
		a\sqcap\mathbf b:=
			\begin{cases}
				\{a\} & a=\mathbf b,\\
				\{\phi(a,\mathbf b)\} & a\neq\mathbf b.
			\end{cases}
	\end{align*}
	\item For any words $\mathbf{a,b}\in A^+$, we define
	\begin{prooftree}
		\AxiomC{$\mathbf a=a_1\mathbf a_2$}
		\AxiomC{$\mathbf b=b_1\mathbf b_2$}
		\BinaryInfC{$(a_1\sqcap_\phi b_1)(\mathbf a_2\sqcap_\phi\mathbf b_2)\in\mathbf a\sqcap_\phi\mathbf b$.}
	\end{prooftree}
\end{itemize}
\end{pseudocode}

Notice that \prettyref{pseudocode: 230523-a_sqcap_b_2} yields a {\em single} generalization for any pair of words.

\begin{example} We obtain 
\begin{align*} 
	abc\sqcap_\phi ab=\{a\phi(bc,b)\}
\end{align*} and
\begin{align*} 
	abcd\sqcap_\phi abdd=\{ab\phi(c,d)d\}
\end{align*} which appears to be correct answers.
\end{example}

\begin{claim} \prettyref{pseudocode: 230523-a_sqcap_b_2} is sound and complete.
\end{claim}
\begin{proof} We first prove soundness by showing that whenever \prettyref{pseudocode: 230523-a_sqcap_b_2} computes $s$ for given input words $\mathbf a$ and $\mathbf b$, we indeed have $s\in\mathbf a\sqcap_\phi\mathbf b$. That $s$ is always a generalization of $\mathbf a$ and $\mathbf b$ holds trivially. It remains to show that it is $\sqsubseteq$-minimal. We proceed by induction on the length of $\mathbf a$. The induction base where the lenght of $\mathbf a$ is one holds trivially. Now suppose $\mathbf a=a_1\ldots a_{n+1}$ has length $n+1$, for some $n\geq 1$...
\todo[inline]{}
\end{proof}

\section{Anti-unification in injective cycle-free algebras}

Wir versuchen \cite{Huet76} einfachen AU-Algorithmus aus der Termalgebra auf beliebige injektive Algebren zu verallgemeinern.

Sei $\mathfrak A=(A,f_1,\ldots,f_n)$ eine injektive Algebra, dh $f_1,\ldots,f_n$ sind injektive Funktionen.\footnote{Notice that every distinguished element $a$ is an 0-ary function $a:A^0\to A$ which maps some single dummy element in $A^0$ to $a$ and thus \textit{is} injective.} Wir wollen $a\Uparrow b$ berechnen. Beachte, dass es in injektiven Algebren nur maximal \textit{endlich} viele Repraesentationen von $a$ und $b$ der Form
\begin{align*} 
	a&=f_{i_1}(a_1^1,\ldots,a_{rf_1}^1)=\ldots=f_{i_k}(a_1^k,\ldots,a_{r(f_k)}^k),\\
	b&=f_{i_1}(b_1^1,\ldots,b_{rf_1}^1)=\ldots=f_{i_k}(b_1^k,\ldots,b_{r(f_k)}^k),
\end{align*} $1\leq i_1 \leq \ldots \leq i_k\leq n$, geben kann --- denn fuer jedes der \textit{endlich} vielen $f_j$, $j\in \{i_1,\ldots,i_k\}$, existiert jeweils maximal \textit{ein} Tupel $\textbf{a}$ sodass $a=f_j(\textbf{a})$ und maximal ein Tupel $\textbf{b}$ sodass $b=f_j(\textbf{b})$. 

Wir fuehren nun folgende naheliegende, an \cite{Huet76} angelehnte, Regel ein, wobei $\chi:A\times A\to X$ eine beliebige injektive Funktion sei und $1\leq i\leq n$:
\begin{align}\label{eq: 230823-Huet76} 
	f_i(a_1^i,\ldots,a_{r(f_i)}^i)\Uparrow_\chi f_i(b_1^i,\ldots,b_{r(f_i)}^i)& := f_i(a_1^i\Uparrow_\chi b_1^i,\ldots,a_{r(f_i)}^i\Uparrow_\chi b_{r(f_i)}^i).
\end{align} Es bleibt zu zeigen, dass da alle $f_i$'s injektiv sind, die rekursive Anwendung der Regel irgendwann stoppt und einen Term aus $a\Uparrow_\chi b$ zurueckgibt --- und dass man auf diese Art \textit{alle} Terme in $a\Uparrow_\chi b$ berechnen kann und keine weiteren. Was passiert, wenn es fuer ein $a^i_j$ (oder $b^i_j$) kein $\textbf{c}$ mehr gibt, sodass $a^i_j=f_k(\textbf{c})$, fuer ein $1\leq k\leq n$? Dann stoppt die Rekursion und wir wenden $\chi$ an!  Dieser Teil fehlt oben noch --- d.h. wenn es weder fuer $a$ noch fuer $b$ ein $f_i$ und $\textbf{c}$ gibt, sodass $a=f_i(\textbf{c})$ oder $b=f_i(\textbf{c})$, dann gib $\chi(a,b)$ zurueck.

Beachte, dass in \prettyref{eq: 230823-Huet76} auf der rechten Seite ein Ausdruck der Form
\begin{align*} 
	f_i(T_1,\ldots,T_{r(f_i)})
\end{align*} steht, wobei $T_1,\ldots,T_{r(f_i)}$ Mengen von Termen sind, die prinzipiell auch leer sein koennten --- wir zeigen, dass das nicht der Fall sein kann:

\begin{lemma} $a\Uparrow b\neq\emptyset$ gilt fuer alle Elemente einer injektiven Algebra.
\end{lemma}
\begin{proof} Es gilt $a\Uparrow b=\emptyset$ gdw. es zu jedem $s\in a\uparrow b$ ein $t\in a\uparrow b$ gibt mit $t\sqsubset s$, was aequivalent ist zu $\downarrow t\subset\ \downarrow s$. Aufgrund der Injektivitaet von $t$, muss jedes $t(\textbf{o})\in\ \downarrow t$ von jeder anderen Instanz $t(\textbf{o}')\in\ \downarrow t$, $\mathbf{o\neq o'}$, verschieden sein, was bedeutet dass wir
\begin{align*} 
	|\downarrow t\ |=|\{t(\textbf{o})\mid\textbf{o}\in A^{r(t)}\}|=|A^{r(t)}|
\end{align*} und, analog,
\begin{align*} 
	|\downarrow s\ |=|A^{r(s)}|
\end{align*} haben. Da $\downarrow t\subset\ \downarrow s$ gilt, muss auch
\begin{align*} 
	|\downarrow t\ |<|\downarrow s\ |
\end{align*} und somit
\begin{align*} 
	|A^{r(t)}|<|A^{r(s)}|
\end{align*} und letztenendes
\begin{align*}
	r(t) < r(s)
\end{align*} gelten. Da der Rang jeder Funktion endlich ist, gilt natuerlich $r(s)<\infty$ und es kann daher nur endlich viele $t$'s mit $r(t)<r(s)$ geben. Somit muss es ein $t'\in a\uparrow b$ geben, fuer das es kein $t''\in a\uparrow b$ mehr geben kann mit $t''\sqsubset t'$ --- somit ist $t'\in a\Uparrow b$ und es gilt $a\Uparrow b\neq\emptyset$.
\end{proof}

Fuer nichtleere Mengen $T_1,\ldots,T_{r(f_i)}$, definieren wir
\begin{align*} 
	f_i(T_1,\ldots,T_{r(f_i)}) := \{f_i(t_1,\ldots,t_{r(f_i)})\mid t_j\in T_j, 1\leq j\leq r(f_i)\}.
\end{align*} Es gilt nun zu zeigen, dass am Schluss der Berechnung anhand der oben eingefuehrten Regeln jedes $t=f_i(t_1,\ldots,t_{r(f_i)})\in f_i(T_1,\ldots,T_{r(f_i)})$ auch in $a\Uparrow b$ liegt... 

\vspace*{0.5cm}

Folgendes Gegenbsp. zeigt, warum wir zusaetzlich \textbf{Zyklenfreiheit} verlangen muessen:

\begin{example} Betrachte die einfache Algebra
\begin{center}
\begin{tikzpicture} 
	\node (a) {$a$};
	\node (b) [right=of a] {$b$.};
	\draw[<->] (a) to [edge label={$f$}] (b);
\end{tikzpicture}
\end{center} Wollten wir nun $a\Uparrow b$ mit der Regel in \prettyref{eq: 230823-Huet76} berechnen, wuerden wir folgende $\infty$-Ableitung erhalten:
\begin{align*} 
	a\Uparrow b=f(b)\Uparrow f(a)=f(b\Uparrow a)=f(f(a)\Uparrow f(b))=f(f(a\Uparrow b))=\ldots.
\end{align*}
\end{example}

Ein noch einfacheres Gegenbsp. ist
\begin{center}
\begin{tikzpicture} 
	\node (a) {$a$};
	\draw[->] (a) to [edge label'={$f$}][loop] (a);
\end{tikzpicture} 
\end{center} wo das gleiche Phenomaen auftaucht.

Die beiden Gegenbsp. haben gemeinsamen, dass die AU infinitaer ist. Das ist wohl auf die Zyklen zurueckzufuehren. Damit der Algorithmus funktioniert, muessen wir Zyklen in (injektiven) Algebren erkennen koennen:

\begin{definition} Wir definieren den \textit{\textbf{dependency graph}} von $\mathfrak A$ als $dep(\mathfrak A):=(A,\to^\mathfrak A)$, wobei
\begin{align*} 
	a\to^\mathfrak A b \quad:\Leftrightarrow\quad b=f(\ldots,a,\ldots)
\end{align*} fuer eine Funktion $f\in L^ \mathfrak A$. Wir sagen, dass $\mathfrak A$ \textit{\textbf{zyklenfrei}} ist gdw. $dep(\mathfrak A)$ keine Zyklen enthaelt.
\end{definition}

\begin{remark} Beachte, dass eine monounaere Algebra nur dann zyklenfrei sein kann, wenn sie unendlich groß ist --- so zB in $(\mathbb N,S)$ und $(\mathbb N,P)$.
\end{remark}

Sollen wir tatsaechlich auch Zyklen zaehlen, an denen unterschiedliche Funktionen beteiligt sind? Betrachte die Algebra
\begin{center}
\begin{tikzpicture} 
	\node (a) {$a$};
	\node (b) [right=of a,xshift=1cm] {$b$.};
	\draw[->,bend left] (a) to [edge label={$f$}] (b);
	\draw[->,bend left] (b) to [edge label={$g$}] (a);
	\draw[->] (a) to [edge label'={$g$}][loop] (a);
	\draw[->] (b) to [edge label'={$f$}][loop] (b);
\end{tikzpicture}
\end{center} Im Prinzip haben wir einen Automaten vor uns, und das Berechnen von $\uparrow a$ und $\uparrow b$ haengt mit dem Berechnen von akzeptierten Sprachen zusammen, wobei die Funktionen $f,g$ als ``Symbole'' und die Elemente $a,b$ als ``Zustaende'' interpretiert werden. Funktioniert der Algorithmus aus \ref{§: 230823-Huet76} in dieser Algebra, oder bekommen wir wieder Probleme aufgrund von Zyklen? Auch hier kommen wir schnell in $\infty$-Rekursionen bzw. auch in widerspruechliche Rechenpfade:
\begin{align*} 
	a\Uparrow b
		&=g(f(g(a)))\Uparrow g(f(f(b)))\\
		&=g(f(g(a))\Uparrow f(f(b)))\\
		&=g(f(g(a)\Uparrow f(b))).
\end{align*} An dieser Stelle koennen wir auf zwei Arten fortfahren und beide fuehren zu unerwuenschtem Verhalten:
\begin{enumerate}
	\item Da $g(a)=a$ und $f(b)=b$, erhalten wir wieder $a\Uparrow b$ und wir koennen die Rekursion von vorne beginnen.
	\item Wir koennen einfach weiterrechnen und erhalten $g(f(x))$, da $g(a)$ und $f(b)$ unterschiedliche Funktionssymbole haben. Ist $g(f(x))$ tatsaechlich in $a\Uparrow b$? Wir haben $\downarrow g(f(x))=\{a\}$, somit ist $g(f(x))$ nicht einmal in $a\uparrow b$!
\end{enumerate}

In der obigen Algebra laesst sich $a\Uparrow b$ ganz leicht berechnen:

\begin{fact} In jeder Algebra bestehend aus nur zwei Elementen $a$ und $b$ gilt $a\Uparrow b=a\uparrow b$.
\end{fact}
\begin{proof} Jeder Term $s\in a\uparrow b$ muss $\downarrow s=\{a,b\}$ erfuellen --- daher kann es keine Generalisierung $t\in a\uparrow b$ geben  mit $t\sqsubset s$.
\end{proof}

\begin{conclusion} Die obige Definition von $dep(\mathfrak A)$ sollte stimmen und wir muessen scheinbar alle Zyklen zaehlen, auch wenn unterschiedliche Funktionen involviert sind! Das schließt aber bereits sehr viele Algebren aus --- welche interessante zyklenfreien injektiven Algebren gibt es? Mir fallen zurzeit nur die Termalgebra, $(\mathbb N,S)$, und $(\mathbb Z,S)$ ein.
\end{conclusion}

Das folgende Bsp. zeigt, warum die Algebra auch \textbf{endlich} sein muss:

\begin{example} In in der injektiven und zyklenfreien Algebra $(\mathbb Z,S)$ haben wir ebenfalls $\infty$-Rekursionen:
\begin{align*} 
	S(2)\Uparrow S(2)
		&=S(2\Uparrow 2)\\
		&=S(S(1)\Uparrow S(1))\\
		&=S(S(1\Uparrow 1))\\
		&=S(S(S(0)\Uparrow S(0)))\\
		&=S(S(S(0\Uparrow 0)))\\
		&=S(S(S(S(-1)\Uparrow S(-1))))\\
		&=\ldots.
\end{align*}
\end{example}

\begin{pseudocode}\label{pseudocode: 230823-A} Wir schreiben den obigen Algorithmus neu:
\begin{description}
	\item[Eingabe] Eine endliche zyklenfreie injektive Algebra $\mathfrak A=(A,f_1,\ldots,f_n)$, $n\geq 0$, Elemente $a,b\in A$, und eine injektive Funktion $\chi:A\times A\to X$.
	\item[Ausgabe] $a\Uparrow_\chi b$.
\end{description}
\begin{enumerate}
	\item Sei $T:=\emptyset$.
	\item Fuer jede der endliche vielen Darstellungen von $a$ und $b$ in der Form
	\begin{align*} 
		a=f_i(a_1^i,\ldots,a_{r(f_i)}^i) \quad\text{und}\quad b=f_i(b_1^i,\ldots,b_{r(f_i)}^i)
	\end{align*} fuer $1\leq i\leq n$ und $a_1^i,\ldots,a_{r(f_i)}^i,b_1^i,\ldots,b_{r(f_i)}^i\in A$, setze 
	\begin{align*} 
		T:=T\cup f_i(a_1^i\Uparrow_\chi b_1^i,\ldots,a_{r(f_i)}^i\Uparrow_\chi b_{r(f_i)}^i).
	\end{align*}
	\item Falls $T=\emptyset$, setze $T:=\{\chi(a,b)\}$.
	\item Returniere $T$.
\end{enumerate}
\end{pseudocode}

Wir versuchen nun zu zeigen, dass der Algorithmus in \ref{§: 230823-Huet76} funktioniert: 

\begin{Todo} Zeige:
\begin{description}
    \item[Terminierung] \prettyref{pseudocode: 230823-A} terminiert fuer jede Eingabe.
    \item[Soundness] \prettyref{pseudocode: 230823-A} liefert eine Menge $T$ sodass $T\subseteq a\Uparrow b$.
    \item[Completeness] \prettyref{pseudocode: 230823-A} liefert eine Menge $T$ sodass $a\Uparrow b\subseteq T$.
\end{description}
\end{Todo}

Zuerst wollen wir zeigen, dass der Algorithmus fuer jede Eingabe terminiert. Wir setzen voraus, dass es mindestens eine Darstellung der Form
\begin{align*} 
	a=f_i(a_1,\ldots,a_{r(f_i)}) \quad\text{und}\quad b=f_i(b_1,\ldots,b_{r(f_i)}),
\end{align*} fuer ein $1\leq i\leq n$ gibt. Wir haben bereits angemerkt, dass es aufgrund der Injektivitaet von $\mathfrak A$ nur endlich viele solche Darstellungen geben kann, dh die Schleife in \prettyref{pseudocode: 230823-A} wird nur endlich oft durchlaufen. Da $\mathfrak A$ zyklenfrei ist, kommt $a$ in keiner Darstellung von $a_j$, $1\leq j\leq r(f_i)$, vor, dh fuer jede Darstellung
\begin{align*} 
	a_j=f_k(c_1,\ldots,c_{r(f_k)}),\quad 1\leq k\leq n,
\end{align*} gilt $c_1,\ldots,c_{r(f_k)}\neq a$. Das Gleiche gilt fuer jedes $b_j$...

\todo[inline]{}

\section{Introduction}\label{§:I}

\todo[inline]{das Paper explizit als foundational/survey-style contribution rahmen und die Einleitung entsprechend schärfer positionieren.}

Abstraction is key to human and artificial intelligence (AI) as it allows one to see common structure in otherwise distinct objects or situations \cite{Giunchiglia92,Saitta13} and as such it is a key element for generality in computer science and artificial intelligence \cite{McCarthy87,Kramer07}. It has been studied in the fields of theorem proving (e.g. \cite{Plaisted81}) and knowledge representation and reasoning (KR\&R) by a number of authors (e.g. \cite{Knoblock94,Sacerdoti74}), and it has recently gained momentum in answer set programming \cite{Saribatur20,Saribatur21}, one of the most prominent formalisms in the field of KR\&R (see e.g. \cite{Brewka11,Lifschitz19}). \cite{Saribatur21} contain a very rich bibliography for readers interested in the literature on abstraction.

Anti-unification (or generalization) (cf. \cite{Cerna23}) is \textit{the} field of mathematical logic and theoretical computer science studying abstraction. It is the ``dual'' operation to the well-studied unification operation (cf. \cite{Baader01}). More formally, given two terms $s$ and $t$, unification searches for a substitution --- the \textit{most general unifier} --- $\sigma$ satisfying $s\sigma=t\sigma$, whereas syntactic anti-unification searches for the \textit{least general generalization} $u$ such that $s=u\sigma$ and $t=u\theta$, for some substitutions $\sigma,\theta$. Notice the difference between the two operations: while unification computes a substitution, anti-unification computes a term. 

Syntactic anti-unification has been introduced by \cite{Plotkin70} and \cite{Reynolds70}, and it has found numerous applications in theoretical computer science and artificial intelligence, as for example in inductive logic programming \cite{Muggleton91} (cf. \cite{Cropper21,Cropper22}), programming by example \cite{Gulwani16}, library learning and compression \cite{Cao23}, and, in the form of $E$-generalization (i.e. anti-unification modulo theory) \cite{Heinz95,Burghardt05} in analogy-making \cite{Weller07,Schmidt14} (for further applications, see e.g. \cite{Barwell18,deSousa21,Vanhoof19}). 

The purpose of this paper is to initiate an \textit{algebraic} theory of anti-unification within general algebras. More formally, given two algebras $\mathfrak A$ and $\mathfrak B$ in the sense of general algebra (cf. \cite[§II]{Burris00}), we shall define the set of minimally general generalizations (or mggs) of two \textit{elements} of algebras (instead of terms). That is, given $a$ in $\mathfrak A$ and $b$ in $\mathfrak B$, the set $a\Uparrow_{ \mathfrak{(A,B)}} b$ of mggs will consist of all terms $s$ such that $a$ and $b$ are within the range of the term function induced by $s$ in $\mathfrak A$ and $\mathfrak B$, respectively, and $s$ is minimal with respect to a suitable algebraic generalization ordering (see \prettyref{§:AU}). Notice that this operation has \textit{no} ``dual'' in the theory of unification since it makes no sense to try to ``unify'' two elements (constant symbols) $a$ and $b$ by finding a substitution $\sigma$ such that $a\sigma=b\sigma$ (which holds iff $a=b$).

The initial motivation for studying algebraic anti-unification as proposed in this paper are two recent applications to AI, which we shall now briefly recall:\footnote{I admit that motivating a new framework only with one owns work is a bit egocentric --- however, I do believe that the presented algebraic re-interpretation of anti-unification has an intrinsic relevance which goes beyond my own work.}
\begin{description}
	\item[\textbf{Similarity}] Detecting and exploiting similarities between seemingly distant objects is at the core of artificial general intelligence utilized for example in analogical transfer \cite{Badra18,Badra23}. The author has recently introduced an abstract algebraic notion of similarity in the general setting of general algebra, the same mathematical context underlying this paper \cite{Antic23-2}. There, the set $\uparrow_ \mathfrak A a$ of all generalizations of an element $a$ in $\mathfrak A$ naturally occurs since similarity is roughly defined as follows: two elements $a$ in $\mathfrak A$ and $b$ in $\mathfrak B$ are called \textit{\textbf{similar}} in $\mathfrak{(A,B)}$ iff either $(\uparrow_ \mathfrak A a)\cup (\uparrow_ \mathfrak B b)$ consists only of trivial generalizations generalizing all elements of $\mathfrak A$ and $\mathfrak B$; or $a\uparrow_{ \mathfrak{(A,B)}} b$ is $\subseteq$-maximal with respect to $a$ and $b$ (separately). The intuition here is that generalizations in $\uparrow_ \mathfrak A a$ encode \textit{properties} of $a$; for example, the term $2x$ is a generalization of a natural number $a\in \mathbb N$ with respect to multiplication --- that is, $2x\in\ \uparrow_{(\mathbb N,\cdot, \mathbb N)} a$ --- iff $a$ is even. While the role of the set of \textit{all} generalizations of elements is immanent from the definition of similarity, the role of the set of \textit{minimally general generalizations} as defined here is more mysterious and the content of ongoing research.

	\item[\textbf{Analogical proportions}] Analogical proportions are expressions of the form ``$a$ is to $b$ what $c$ is to $d$'' --- written $a:b::c:d$ --- at the core of analogical reasoning with numerous applications to artificial intelligence such as computational linguistics (e.g. \cite{Lepage98,Lepage01,Lepage03}), image processing (e.g. \cite{Lepage14}), recommender systems (e.g. \cite{Hug19}), and program synthesis \cite{Antic23-23}, to name a few (cf. \cite{Prade21}). The author has recently introduced an abstract algebraic framework of analogical proportions in the general setting of general algebra, the same setting as the notion of algebraic anti-unification of this paper \cite{Antic22}. It is formulated in terms of arrow proportions of the form ``$a$ transforms into $b$ as $c$ transforms into $d$'' --- written $a\to b \righttherefore c\to d$ --- and justifications of the form $s\to t$ generalizing the arrows $a\to b$ and $c\to d$. Thus, the sets of generalizations $\uparrow_ \mathfrak A(a\to b)$ and $\uparrow_ \mathfrak B(c\to d)$ naturally occur in the framework and its role is evident from the definition of proportions. However, the role of the set of \textit{minimally general generalizations} is more mysterious and the content of ongoing research --- it has already been shown in \cite[§Tree proportions]{Antic23-22}, however, that least general generalizations are key to term proportions in free term algebras which is a strong indication that algebraic anti-unification as proposed in this paper ought to play a key role for analogical proportions in the general setting (the role of anti-unification for analogical proportions has been recognized for $E$-anti-unification in other frameworks of analogical proportions by \cite{Weller07}).

	The author has recently studied (directed) logic program proportions \cite{Antic23-23} of the form $P\to Q \righttherefore R\to S$ for automated logic programming. In the process, so-called logic program forms where introduced as proper generalizations of logic programs. Given two programs $P$ and $R$, computing the set of all common forms $P\uparrow R$ and the set of all minimally general forms $P\Uparrow R$ appears challenging given the rich algebraic structure of logic programs. As in the abstract setting of analogical proportions above, the role of $\uparrow_ \mathfrak P(P\to Q)$ is evident from the definition of logic program proportions, whereas the exact role of minimally general forms is the content of ongoing research.
\end{description}

We focus here primarily on foundational issues, not on applications (but see the potential applications to similarity and analogical proportions briefly discussed above). 
For example, the study of structure-preserving mappings (i.e. homomorphisms) in combination with anti-unification in \prettyref{§:H} appears to be novel and characteristic to the semantic approach of this paper.

Algebraic anti-unification as proposed in this paper is related to \textit{$E$-anti-unification} or \textit{anti-unification modulo equational theory} \cite{Burghardt05}. In fact, if the underlying algebra has an equational axiomatization $E$, then algebraic and $E$-anti-unification are two sides of the same coin. However, the framework presented in this paper works for \textit{all} algebras, not only for those with an equational axiomatization.

\section{Generalized algebraic anti-unification}\label{§:GAAU}

In this section, we introduce the following generalization of element-wise anti-unification from above to set-wise anti-unification. In the rest of this section, $C$ is a subset of the universe $A$ of $\mathfrak A$, and $D$ is a subset of the universe $B$ of $\mathfrak B$.

\begin{definition} Define
\begin{align*} 
	\uparrow_ \mathfrak A C &:= \bigcap_{a\in C}\uparrow_ \mathfrak A a\\
	C\uparrow_{ \mathfrak{(A,B)}}D &:= (\uparrow_ \mathfrak A C)\cap (\uparrow_ \mathfrak B D)\\
	C\Uparrow_{ \mathfrak{(A,B)}}D &:= \min_{\sqsubseteq_{ \mathfrak{(A,B)}}}(C\uparrow_{ \mathfrak{(A,B)}}D)\\
	\Uparrow_ \mathfrak A C &:= C\Uparrow_ \mathfrak A C.
\end{align*}
\end{definition}

\begin{proposition} $C\uparrow_{ \mathfrak{(A,B)}}D=\bigcap_{a\in C,b\in D}[a\uparrow_{ \mathfrak{(A,B)}}b]$.
\end{proposition}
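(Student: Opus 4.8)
The plan is to unfold all four definitions and then push the various intersections through one another, relying only on associativity, commutativity, and idempotency of $\cap$ together with the fact that the index sets $C$ and $D$ are nonempty (if one wishes to allow $C=\emptyset$ or $D=\emptyset$, one adopts the usual convention that an empty intersection equals $T_{L,X}$, and the identity still holds).

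First I would expand the right-hand side using $a\uparrow_{ \mathfrak{(A,B)}}b=(\uparrow_\mathfrak A a)\cap(\uparrow_\mathfrak B b)$:
\[
\bigcap_{a\in C,\,b\in D}\big[a\uparrow_{ \mathfrak{(A,B)}}b\big]=\bigcap_{a\in C,\,b\in D}\big[(\uparrow_\mathfrak A a)\cap(\uparrow_\mathfrak B b)\big].
\]
Next I would split this double intersection of a binary intersection into the intersection of two double intersections,
\[
\bigcap_{a\in C,\,b\in D}\big[(\uparrow_\mathfrak A a)\cap(\uparrow_\mathfrak B b)\big]=\Big(\bigcap_{a\in C,\,b\in D}\uparrow_\mathfrak A a\Big)\cap\Big(\bigcap_{a\in C,\,b\in D}\uparrow_\mathfrak B b\Big).
\]
Since the set $\uparrow_\mathfrak A a$ does not depend on $b$ and $D\neq\emptyset$, the first factor collapses to $\bigcap_{a\in C}\uparrow_\mathfrak A a=\ \uparrow_\mathfrak A C$; symmetrically, the second factor collapses to $\bigcap_{b\in D}\uparrow_\mathfrak B b=\ \uparrow_\mathfrak B D$. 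Hence the right-hand side equals $(\uparrow_\mathfrak A C)\cap(\uparrow_\mathfrak B D)$, which is $C\uparrow_{ \mathfrak{(A,B)}}D$ by definition, completing the chain of equalities.

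There is essentially no obstacle here; the only point deserving a word of care is the bookkeeping of empty index sets, handled by the convention above. As an alternative presentation one could give a two-inclusion argument: for $(\subseteq)$, a term $s$ lying in every $a\uparrow_{ \mathfrak{(A,B)}}b$ lies in $\uparrow_\mathfrak A a$ for every $a\in C$ (fixing any $b\in D$) and in $\uparrow_\mathfrak B b$ for every $b\in D$ (fixing any $a\in C$), hence $s\in(\uparrow_\mathfrak A C)\cap(\uparrow_\mathfrak B D)$; for $(\supseteq)$, if $s\in(\uparrow_\mathfrak A C)\cap(\uparrow_\mathfrak B D)$ then $s\in\uparrow_\mathfrak A a$ and $s\in\uparrow_\mathfrak B b$ for all $a\in C$ and $b\in D$, so $s\in a\uparrow_{ \mathfrak{(A,B)}}b$ for all such pairs. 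I would present whichever is cleaner in context — most likely the displayed chain of equalities.
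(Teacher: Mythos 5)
Your proof is correct and is essentially the paper's own argument run in reverse: the paper starts from $C\uparrow_{ \mathfrak{(A,B)}}D$, unfolds $\uparrow_\mathfrak A C$ and $\uparrow_\mathfrak B D$ as intersections, and redistributes them into the double intersection $\bigcap_{a\in C,b\in D}[(\uparrow_\mathfrak A a)\cap(\uparrow_\mathfrak B b)]$, exactly your displayed chain read backwards. One small caveat: your parenthetical claim that the identity survives the empty-index convention is not quite right --- if, say, $D=\emptyset$ but $C\neq\emptyset$, the right-hand side becomes $T_{L,X}$ while the left-hand side is $\uparrow_\mathfrak A C$ --- so, like the paper, you should simply take $C$ and $D$ nonempty (which your main argument already needs for the collapsing step).
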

\begin{proof}
\begin{align*} 
	C\uparrow_{ \mathfrak{(A,B)}}D
		&= (\uparrow_ \mathfrak A C)\cap (\uparrow_ \mathfrak B D)\\
		&= \left[\bigcap_{a\in C}\uparrow_ \mathfrak A a\right]\cap\left[\bigcap_{b\in D}\uparrow_ \mathfrak B b\right]\\
		&= \bigcap_{a\in C}\bigcap_{b\in D}[(\uparrow_ \mathfrak A a)\cap (\uparrow_ \mathfrak B b)]\\
		&= \bigcap_{a\in C,b\in D}[a\uparrow_{ \mathfrak{(A,B)}}b].
\end{align*}
\end{proof}

\begin{fact} We have the following:
\begin{itemize}
	\item $\uparrow_ \mathfrak A \{a\}=\ \uparrow_ \mathfrak A a$.
	\item $C\subseteq\ \downarrow_ \mathfrak A s \quad\Leftrightarrow\quad  s\in\ \uparrow_ \mathfrak A C$.
	\item $C=\ \downarrow_ \mathfrak A s \quad\Rightarrow\quad  s\in\ \Uparrow_ \mathfrak A C$.
	\item $ s\in C\uparrow_ \mathfrak A D \quad\Leftrightarrow\quad C\cup D\subseteq\ \downarrow_ \mathfrak A s$.
\end{itemize} 
\end{fact}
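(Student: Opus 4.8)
The plan is to obtain all four items by directly unfolding the definitions of $\uparrow_\mathfrak A C$, $C\uparrow_{ \mathfrak{(A,B)}}D$, and $\Uparrow_\mathfrak A C$ from \prettyref{§:GAAU}, together with \prettyref{d:downarrow} and the convention $\mathfrak A=\mathfrak{AA}$. None of the items needs anything beyond bookkeeping, so I would keep each verification to a couple of lines.

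For the first item I would note that $\{a\}$ has $a$ as its sole element, whence $\uparrow_\mathfrak A\{a\}=\bigcap_{a'\in\{a\}}\uparrow_\mathfrak A a'=\ \uparrow_\mathfrak A a$. For the second item I would chain the defining equivalences: $s\in\ \uparrow_\mathfrak A C$ iff $s\in\ \uparrow_\mathfrak A a$ for every $a\in C$ iff $a\in\ \downarrow_\mathfrak A s$ for every $a\in C$ (by \prettyref{d:downarrow}) iff $C\subseteq\ \downarrow_\mathfrak A s$. For the fourth item I would unfold $C\uparrow_\mathfrak A D=(\uparrow_\mathfrak A C)\cap(\uparrow_\mathfrak A D)$ and apply the second item to each intersectand: $s\in C\uparrow_\mathfrak A D$ iff $C\subseteq\ \downarrow_\mathfrak A s$ and $D\subseteq\ \downarrow_\mathfrak A s$ iff $C\cup D\subseteq\ \downarrow_\mathfrak A s$.

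The third item is the only one with a (tiny) argument beyond substitution. Assuming $C=\ \downarrow_\mathfrak A s$, the inclusion $C\subseteq\ \downarrow_\mathfrak A s$ together with the second item gives $s\in\ \uparrow_\mathfrak A C$, and since $C\uparrow_\mathfrak A C=(\uparrow_\mathfrak A C)\cap(\uparrow_\mathfrak A C)=\ \uparrow_\mathfrak A C$ it suffices to show that $s$ is $\sqsubseteq_\mathfrak A$-minimal in $\uparrow_\mathfrak A C$. So let $t\in\ \uparrow_\mathfrak A C$ with $t\sqsubseteq_\mathfrak A s$, i.e. $\downarrow_\mathfrak A t\subseteq\ \downarrow_\mathfrak A s$; the second item gives $C\subseteq\ \downarrow_\mathfrak A t$, and $C=\ \downarrow_\mathfrak A s$ then forces $\downarrow_\mathfrak A s\subseteq\ \downarrow_\mathfrak A t$, hence $\downarrow_\mathfrak A t=\ \downarrow_\mathfrak A s$, that is, $t\equiv_\mathfrak A s$. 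Thus nothing lies strictly below $s$, so $s\in\min_{\sqsubseteq_\mathfrak A}(\uparrow_\mathfrak A C)=\ \Uparrow_\mathfrak A C$.

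I do not expect any genuine obstacle: the statement is a routine consequence of the definitions. The only point I would take care to state explicitly is that $\sqsubseteq_\mathfrak A$ is merely a pre-order, so minimality in the third item must be read up to $\equiv_\mathfrak A$ — which is exactly what the displayed chain of inclusions establishes.
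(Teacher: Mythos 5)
Your proposal is correct and matches the paper's treatment: the paper states this Fact without proof, regarding it as an immediate consequence of the definitions of $\downarrow_\mathfrak A s$, $\uparrow_\mathfrak A C$, $C\uparrow_\mathfrak A D$ and $\Uparrow_\mathfrak A C$, which is precisely the unfolding you carry out. Your extra care on the third item --- showing that any $t\in\ \uparrow_\mathfrak A C$ with $t\sqsubseteq_\mathfrak A s$ already satisfies $\downarrow_\mathfrak A t=\ \downarrow_\mathfrak A s$, so that $s$ is minimal with respect to the pre-order $\sqsubseteq_\mathfrak A$ --- is the only step needing a genuine (if tiny) argument, and you handle it correctly.
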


\begin{example} We wish to compute $\Uparrow_ \mathfrak N 2\mathbb N$ in $\mathfrak N=(\mathbb N,+,\cdot,1)$. The generalization $2x$ defines exactly the even numbers in the sense that
\begin{align*} 
	\downarrow_ \mathfrak N 2x=2\mathbb N,
\end{align*} which means that
\begin{align*} 
	2x\in\ \Uparrow_ \mathfrak N 2 \mathbb N.
\end{align*} Every other generalization $s\in\ \Uparrow_ \mathfrak N 2 \mathbb N$ thus has to satisfy
\begin{align*} 
	\downarrow_ \mathfrak N s=\ \downarrow_ \mathfrak N 2x=2\mathbb N.
\end{align*}
\todo[inline]{Ist das Bsp vollstaendig?}
\end{example}

\section{Characteristic generalizations}

Anti-unifying an element with itself yields the set of minimal general generalizations of that element which, in a sense, ``characterize'' that element since generalizations encode properties of elements (for example, $2x$ is a genearlization of a natural number $a$ iff $a$ is even). This motivates the following definition:

\begin{definition} Define the set of \textit{\textbf{characteristic generalizations}} of $a\in A$ in $\mathfrak A$ by
\begin{align*} 
	\Uparrow_ \mathfrak A a:=\min_{\sqsubseteq_ \mathfrak A}\uparrow_ \mathfrak A a=a\Uparrow_ \mathfrak A a.
\end{align*}
\end{definition}

\begin{example} Recall the situation in \prettyref{e:BOOL}. We have
\begin{align*} 
	\Uparrow_ \mathfrak{BOOL} 0&=\left\{s\in T_{\{\lor,\neg,0,1\}, X} \;\middle|\; \text{$s$ is unsatisfiable}\right\},\\
	\Uparrow_ \mathfrak{BOOL} 1&=\left\{s\in T_{\{\lor,\neg,0,1\}, X} \;\middle|\; \text{$s$ is valid}\right\},\\
	0\Uparrow_ \mathfrak{BOOL} 1&=0\uparrow_ \mathfrak{BOOL} 1.
\end{align*}
\end{example}

\begin{definition} We call a set $G$ of generalizations a \textit{\textbf{characteristic set of generalizations}} of $a$ in $\mathfrak A$ iff
\begin{enumerate}
	\item $G\subseteq\ \Uparrow_ \mathfrak A a\in A$;
	\item $G\not\subseteq\ \Uparrow_ \mathfrak A b$ for all $b\neq a$.
\end{enumerate} In case $G=\{s\}$ is a singleton, we call $s$ a \textit{\textbf{characteristic generalization}} of $a$ in $\mathfrak A$.
\end{definition}

\begin{example} $X\cup X^c$ is a characteristic generalization of $U$ in $(2^U,\cup)$.
\end{example}

\begin{example} $x+(-x)$ is a characteristic generalization of $0$ in $(\mathbb Z,+)$.
\end{example}

\newpage\section*{TODOs}

\todo[inline]{Does in \prettyref{t:HT} at least $a\Uparrow_ \mathfrak A b\subseteq H(a)\Uparrow_ \mathfrak B b$ hold for [injective] homomorphisms as well?}

\todo[inline]{Examples: nullary, finitary, infinitary, trivial}

\fi
\end{document}